\def\eqref#1{equation~\ref{#1}}
\def\1{\bm{1}}
\DeclareMathAlphabet{\mathsfit}{\encodingdefault}{\sfdefault}{m}{sl}
\SetMathAlphabet{\mathsfit}{bold}{\encodingdefault}{\sfdefault}{bx}{n}
\newtheorem{theorem}{Theorem}[section]
\newtheorem{lemma}[theorem]{Lemma}
\newtheorem{corollary}[theorem]{Corollary}
\newtheorem{proposition}[theorem]{Proposition}
\title{AQUA: \underline{A}ttention via \underline{QU}ery m\underline{A}gnitudes for Memory and Compute Efficient Inference in LLMs}
\author{\name Santhosh G S \email santhoshgs013@gmail.com \\
      \addr Centre for Responsible AI\\
      Indian Institute of Technology Madras
      \AND
      \name Saurav Prakash \email saurav@ee.iitm.ac.in \\
      \addr Department of Electrical Engineering\\
      Indian Institute of Technology Madras
      \AND
      \name Balaraman Ravindran \email ravi@dsai.iitm.ac.in\\
      \addr Wadhwani School of Data Science and Artificial Intelligence\\
      Indian Institute of Technology Madras}
\begin{document}

\maketitle

\begin{abstract}
The quadratic complexity of the attention mechanism remains a fundamental barrier to scaling Large Language Models (LLMs) to longer contexts, creating a critical bottleneck in both computation and memory. To address this, we introduce \textbf{AQUA} (\textbf{A}ttention via \textbf{QU}ery m\textbf{A}gnitudes) a novel and versatile approximation strategy that significantly reduces the cost of attention with a graceful performance trade-off. Our method operates in two phases: an efficient offline step where we compute a universal, language agnostic projection matrix via SVD on a calibration dataset, and an online inference step where we project query and key vectors and dynamically select a sparse subset of dimensions based on the query's magnitude. We provide a formal theoretical analysis of AQUA, establishing the break-even point at which it becomes more computationally efficient than standard attention. Our empirical evaluations on state-of-the-art models like Llama-3.1-8B demonstrate that a 25\% reduction in the attention dot-product computation can be achieved with a statistically insignificant impact on performance across a wide range of benchmarks. We further showcase the versatility of AQUA by demonstrating its ability to synergistically accelerate existing token eviction methods like H2O and to directly reduce KV-cache memory size. By offering a controllable knob to balance efficiency and accuracy, AQUA provides a practical and powerful tool for making large-scale LLM inference more accessible and sustainable.
\end{abstract}

\section{Introduction}

Large Language Models (LLMs) have rapidly become a transformative force in Artificial Intelligence, fueling the pursuit of Agentic AI - autonomous systems capable of tackling complex tasks with minimal human guidance \citep{sapkota2025aiagentsvsagentic}. However, realizing this ambitious vision hinges on the ability to process vast contexts, often spanning millions of tokens, which in turn creates an immense demand for computational and memory resources \citep{bommasani2022opportunitiesrisksfoundationmodels}. At the heart of this challenge is the Transformer's attention mechanism \citep{vaswani2023attentionneed}. Even with their success, attention's computational cost scales quadratically with sequence length. This scaling issue has become a fundamental bottleneck, posing a significant barrier to the continued advancement and deployment of ever-larger models \citep{tay2022efficienttransformerssurvey, beltagy2020longformerlongdocumenttransformer}.

A considerable amount of research has been done in mitigating these challenges, with most efforts focusing on reducing either the memory footprint or the computational load \citep{tay2022efficienttransformerssurvey}. Few works, however, have worked to address both simultaneously. Existing strategies often target specific components of the Transformer architecture, such as pruning MLP layers or approximating the attention mechanism itself \citep{zhong2025blockprunerfinegrainedpruninglarge}. Within the attention layer, these approximations may focus solely on the attention weights or extend to the final attention outputs \citep{choromanski2022rethinkingattentionperformers}.

The present auto-regressive inferencing mechanism trades-off between memory and computation, using a method called Key-Value (KV) caching. To avoid re-computing the key and value vectors for all previous tokens at each new decoding step, models cache these activations \citep{hichri_2025, chen2024naclgeneraleffectivekv}. This practice is essential for the real-time, responsive performance of modern LLMs and drastically reduces computational load. However, the memory required to store this cache grows linearly with the sequence length. For very large contexts, the memory footprint of the KV-cache can even supersede the memory required to load the model's weights for inference, creating a severe memory bottleneck \citep{yan2025lazymaracceleratingmaskedautoregressive}. This complex interplay highlights a pressing need for methods that can optimize the attention mechanism at inference time, allowing practitioners to flexibly budget between compute, memory, and accuracy based on their specific use-case and application.

\begin{figure*}[t]
\centering
\includegraphics[width=1.0\textwidth]{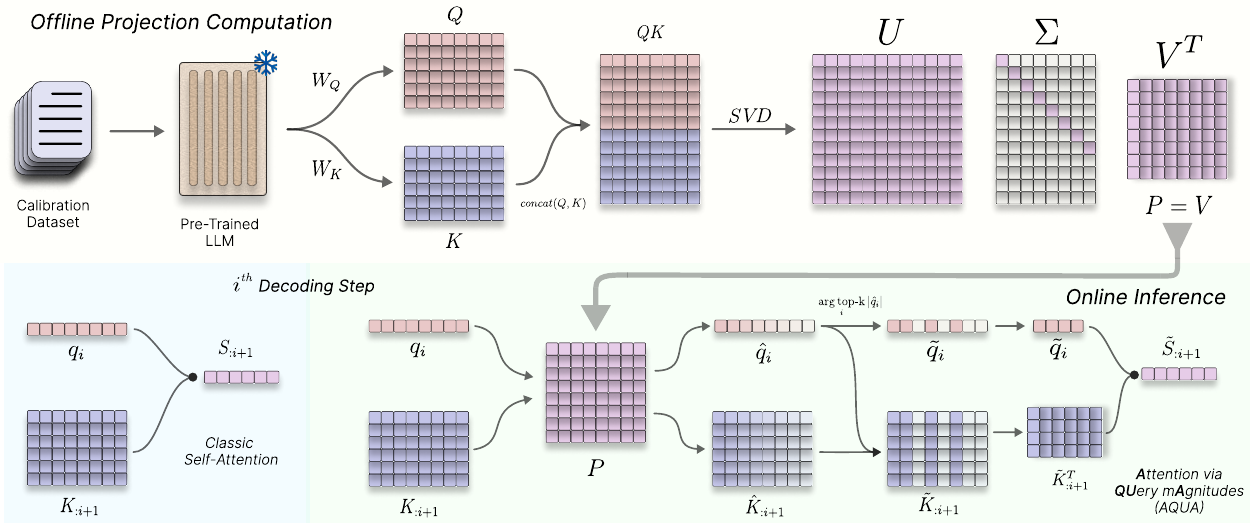}
\caption{A schematic of AQUA, illustrating the two-phase process: (Top) Offline computation of a universal projection matrix P, and (Bottom) Online inference using projected vectors and magnitude-based dimension selection.}
\label{fig:qma_overview}
\end{figure*}

In this work, we introduce \textbf{A}ttention via \textbf{QU}ery m\textbf{A}gnitudes for Memory and
Compute Efficient Inference in LLMs (\textbf{AQUA}), a novel approach designed to precisely fill this gap, concurrently reducing both the computational and memory demands of the attention mechanism. As illustrated in Figure \ref{fig:qma_overview}, our method targets the attention weights, motivated by our finding that the query and key vectors can be efficiently transformed into a sparser representation. In this new space, we can perform an informed pruning of dimensions with the lowest magnitudes using transformed queries as references. Our empirical analysis reveals that even after pruning 25\% of the lowest-magnitude dimensions from the query and key vectors, the resulting loss in accuracy on various benchmarks is, on average is less than 1\%. 

Similar to \citet{singhania2024lokilowrankkeysefficient}, we use an universal projection matrix, which we compute offline. However, instead of being exclusively based on key vectors, our projection matrix utilizes information from both key vectors and query vectors combined together. This allows us to project the query and key vectors to an aligned low dimensional space, so when we prune using magnitudes of queries, the dimensions get aligned on keys as well, which leads to performance retention even on aggressive pruning. Exploiting this low dimensional property of the rotated queries and keys to prune lesser magnitude components, we compute attention weights only on the remaining heavy hitter components. We detail the methodology for its computation in Section \ref{sec:proj_matrix}. Furthermore, we theoretically prove that for sequences exceeding a certain length, our method yields progressively increasing computational savings. We also demonstrate that our method is highly versatile: it can be used as a standalone attention approximation strategy for direct inference, or it can be integrated as a complementary component on top of existing token eviction strategies to further reduce the memory requirements by eliminating sparsely weighted tokens corresponding to approximated attention weights.

So, putting it all together, in this work we make the following contributions:
\begin{itemize}[leftmargin=*, noitemsep]
\item We propose a novel attention approximation strategy, named AQUA, which reduces the dimensionality of query and key vectors based on magnitude. This method can function as a standalone replacement for standard attention or be integrated with other token eviction strategies to improve their efficiency.
\item We present an empirical justification for using a globally calibrated, offline projection matrix, demonstrating that this approach is significantly more efficient than computing an exact, online projection via SVD without a substantial loss in performance.
\item We provide a formal theoretical analysis that establishes the computational break-even point where our method begins to outperform standard attention, proving that the performance gains increase as more tokens are decoded.
\item We conduct a comprehensive benchmark evaluation, comparing our standalone method against standard attention and demonstrating its synergistic performance improvements when applied on top of existing token eviction strategies, evaluated on both perplexity and downstream task-based metrics.
\end{itemize}

\section{Related Work}

The challenge of optimizing Large Language Model (LLM) inference has spurred a wide array of research. While a broad survey of related paradigms is available in Appendix~\ref{app:related_work_survey}, this section focuses on the closest set of works in attention approximation and token eviction that inform our approach.

\subsection*{Attention Approximation Techniques}
This line of research seeks to reduce computational cost by approximating the attention mechanism. A notable recent approach is EigenAttention \citep{saxena2024eigenattentionattentionlowrank}, which compresses the KV-cache by decomposing the projection weights for K and V into low-rank factors. While effective at reducing memory, its compression rank is a fixed hyperparameter that must be decided offline. Our work is similar in its goal of reducing dimensionality but differs by operating on projected vectors at runtime with a dynamic selection mechanism.

Two other highly relevant works are SparQ Attention and LoKi Attention. SparQ Attention \citep{ribar2024sparqattentionbandwidthefficientllm} also uses query magnitudes for approximation but requires costly non-contiguous memory access and increases the memory footprint by 50\%. LoKi Attention \citep{singhania2024lokilowrankkeysefficient} uses an offline projection matrix similar to ours but relies on a static slicing of the trailing dimensions, a strategy we empirically show to be suboptimal. Our work, AQUA, builds on these insights, combining the efficiency of an offline projection with a more effective dynamic magnitude selection, all while avoiding the overheads of prior methods.

\subsection*{Token Eviction Techniques}
A popular strategy for managing long contexts is to prune the KV-cache by evicting tokens. A seminal work in this area is H2O \citep{zhang2023h2oheavyhitteroracleefficient}, which identifies ``Heavy Hitter'' (H2) tokens by monitoring their accumulated attention scores. Its core innovation is a KV-cache eviction policy that dynamically retains a balance of these important H2 tokens alongside the most recent tokens, recognizing that both are crucial for maintaining context. While highly effective, this approach permanently discards non-H2 tokens, risking information loss. Our method is complementary; as shown in our experiments, we can accelerate the identification of these Heavy Hitters by first computing an approximate attention score, thereby enhancing the efficiency of the eviction policy itself.

\begin{figure*}[t]
\centering
\includegraphics[width=1.0\textwidth]{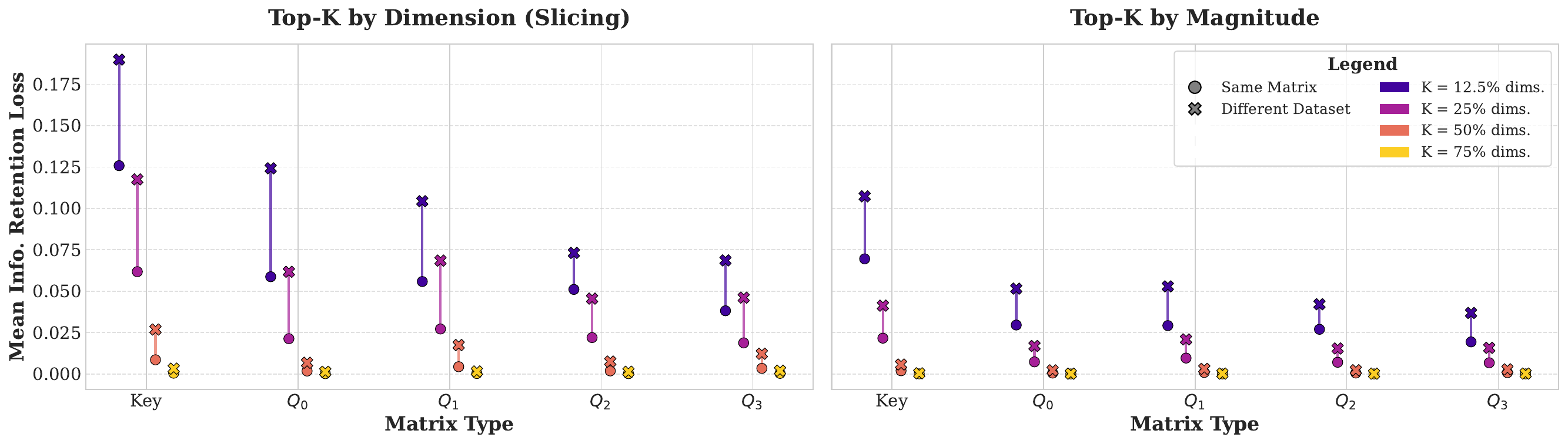}
\caption{Comparison of mean information retention loss for two projection matrix sources (Online ``Same Matrix'' SVD vs. Offline ``Different Dataset'' SVD) and two dimension selection methods (``Top-K by Dimension'' vs. ``Top-K by Magnitude''). The analysis is on Layer 0, Head 0 of \texttt{Llama-3.1-8B} \citep{grattafiori2024llama3herdmodels}. The minimal gap between the ``Same Matrix'' and ``Different Dataset'' points validates our offline calibration approach. The significant reduction in loss for ``Top-K by Magnitude'' provides strong evidence for its superiority over naive slicing.}
\label{fig:dumbbell_comparison}
\vspace{-1em}
\end{figure*}

\section{Primer on Classic Attention \& Notations}

The self-attention mechanism is a cornerstone of the Transformer architecture \citep{vaswani2023attentionneed}. In the context of auto-regressive decoding, where tokens are generated sequentially, the attention mechanism computes a context vector by attending to all previously generated tokens in the sequence \citep{vaswani2023attentionneed}. This section formalizes the step-by-step process of classic (or standard) attention for a single attention head and establishes the notations used throughout this paper.

\subsection{Core Components and Notations}

Let us define the primary dimensions and matrices involved in the attention computation:
\begin{itemize}
    \item $d_{model}$: The primary embedding dimension of the model.
    \item $d_{head}$: The dimension of a single attention head's query, key, and value vectors.
    \item $W_Q, W_K, W_V \in \mathbb{R}^{d_{model} \times d_{head}}$: The learnable weight matrices used to project the input embeddings into the query, key, and value spaces, respectively \citep{vaswani2023attentionneed}.
\end{itemize}

For the generation of the $(i+1)^{th}$ token in a sequence (where $i$ is the current time step, starting from $i=0$), we consider the input embedding $x_i \in \mathbb{R}^{1 \times d_{model}}$ corresponding to the token at position $i$ \citep{vaswani2023attentionneed}. From this, we derive the following vectors:
\begin{itemize}
    \item $q_i \in \mathbb{R}^{1 \times d_{head}}$: The query vector for the current token, computed as $q_i = x_i W_Q$.
    \item $k_i \in \mathbb{R}^{1 \times d_{head}}$: The key vector for the current token, computed as $k_i = x_i W_K$.
    \item $v_i \in \mathbb{R}^{1 \times d_{head}}$: The value vector for the current token, computed as $v_i = x_i W_V$.
\end{itemize}

A critical component of efficient auto-regressive decoding is the Key-Value (KV) cache, which stores the key and value vectors from all previous time steps \citep{hichri_2025}. We denote the cached matrices up to, but not including, the current step $i$ as:
\begin{itemize}
    \item $K_{:i} \in \mathbb{R}^{i \times d_{head}}$: The matrix of key vectors from tokens $0, \dots, i-1$.
    \item $V_{:i} \in \mathbb{R}^{i \times d_{head}}$: The matrix of value vectors from tokens $0, \dots, i-1$.
\end{itemize}

\subsection{The Auto-Regressive Attention Step}

At decoding step $i$, the model computes the attention output for the current token by performing the following sequence of operations:

\begin{enumerate}
    \item \textbf{Update the KV Cache:} The key and value vectors for the current token, $k_i$ and $v_i$, are concatenated to their respective cache matrices \citep{hichri_2025}.
    \begin{align}
        K_{:i+1} &= \text{concat}(K_{:i}, k_i) \in \mathbb{R}^{(i+1) \times d_{head}} \\
        V_{:i+1} &= \text{concat}(V_{:i}, v_i) \in \mathbb{R}^{(i+1) \times d_{head}}
    \end{align}

    \item \textbf{Compute Attention Scores:} The query vector $q_i$ is used to score each key in the updated key cache via a dot product. This step identifies which of the previous tokens are most relevant to the current one \citep{vaswani2023attentionneed}.
    \begin{equation}
        S = q_i K_{:i+1}^T \in \mathbb{R}^{1 \times (i+1)}
    \end{equation}
    The computational complexity of this operation is $O((i+1) \cdot d_{head})$, which grows linearly with the sequence length per token, but the overall self-attention layer computation is quadratic in the full sequence length, representing a significant bottleneck in LLMs \citep{keles2022computationalcomplexityselfattention}.

    \item \textbf{Scale and Normalize:} The scores are scaled by the inverse square root of the head dimension to prevent the dot products from growing too large, which could saturate the softmax function. A softmax is then applied to obtain a probability distribution, representing the attention weights \citep{nakanishi2025scalablesoftmaxsuperiorattention}.
    \begin{equation}
        A = \text{softmax}\left(\frac{S}{\sqrt{d_{head}}}\right) \in \mathbb{R}^{1 \times (i+1)}
    \end{equation}

    \item \textbf{Compute Context Vector:} The attention weights $A$ are used to compute a weighted sum of the value vectors in the value cache. This produces the context vector $c_i$, which summarizes the information from the preceding tokens relevant to the current step.
    \begin{equation}
        c_i = A V_{:i+1} \in \mathbb{R}^{1 \times d_{head}}
    \end{equation}
\end{enumerate}
This resulting context vector $c_i$ is then passed to subsequent layers of the Transformer decoder, forming the basis for predicting the next token in the sequence.

\section{AQUA Description}

The AQUA mechanism, illustrated in Figure \ref{fig:qma_overview}, is a two-phase process designed to optimize the standard attention computation by leveraging a pre-computed projection and dynamic, magnitude-based dimension selection.

The first phase, \textbf{Offline Projection Computation}, is performed once per model to generate a universal projection matrix $P$. The detailed methodology for constructing this matrix by collecting activations and performing SVD is elaborated in Section \ref{subsubsec:methodology_for_calibrating_P}.

The second phase, \textbf{Online Inference}, occurs at each decoding step. The bottom panel of Figure \ref{fig:qma_overview} contrasts our method with classic self-attention. Instead of computing attention on the full vectors, AQUA first projects the incoming query ($q_i$) and keys ($K_{:i+1}$) into the new space using the pre-computed matrix $P$. The core of our method is the next step: we dynamically identify the top-$k$ dimensions based on the absolute magnitude of the components in the projected query vector ($\hat{q}_i$). Finally, the approximate attention scores ($\tilde{S}_{:i+1}$) are computed using only this sparse subset of dimensions from both the query and keys, significantly reducing the computational cost of the dot product.

\subsection{Algorithm}

The online inference phase is formalized in Algorithm \ref{alg:query_mag_detailed}. At each decoding step, the algorithm takes the current query and key vectors, along with the existing key cache, as input. It begins by projecting the current vectors and updating the cache in the new coordinate space. The key step involves identifying the indices of the top-$k$ dimensions based on the magnitude of the projected query. The final, approximate attention scores are then computed using only these dynamically selected dimensions from both the query and the full key cache.

\begin{algorithm}[!t]
\caption{AQUA (for token $i{+}1$)}
\label{alg:query_mag_detailed}
\textbf{Input:} Current query $q_i \in \mathbb{R}^{1 \times d_{\text{head}}}$, key $k_i \in \mathbb{R}^{1 \times d_{\text{head}}}$, key cache $K_{:i} \in \mathbb{R}^{i \times d_{\text{head}}}$ \\
\textbf{Parameter:} Top-$k$ dims $k \in \{1,\dots,d_{\text{head}}\}$, projection matrix $P \in \mathbb{R}^{d_{\text{head}} \times d_{\text{head}}}$ \\
\textbf{Output:} Approximate attention scores $\tilde{S} \in \mathbb{R}^{1 \times (i+1)}$
\begin{algorithmic}[1]
\State $\hat{q}_i \gets q_i P$ \Comment{Project query}
\State $\hat{k}_i \gets k_i P$ \Comment{Project key}
\State $\hat{K}_{:i+1} \gets \text{concat}(\hat{K}_{:i}, \hat{k}_i)$ \Comment{Update projected key cache}
\State $v_{\text{mag}} \gets |\hat{q}_i|$ \Comment{Compute query magnitude}
\State $I_{\text{topk}} \gets \arg\text{TopK}(v_{\text{mag}})$ \Comment{Select top-$k$ dimensions}
\State $\tilde{q}_i \gets \hat{q}_i[:, I_{\text{topk}}]$ \Comment{Slice projected query}
\State $\tilde{K}_{:i+1} \gets \hat{K}_{:i+1}[:, I_{\text{topk}}]$ \Comment{Slice projected key cache}
\State $\tilde{S} \gets \tilde{q}_i \tilde{K}_{:i+1}^\top$ \Comment{Compute attention scores}
\State \Return $\tilde{S}$
\end{algorithmic}
\end{algorithm}

\section{Theoretical Performance Results}
\label{sec:analysis_results}

To formally ground our method, we analyze its computational complexity, focusing on the calculation of the unnormalized attention scores - the dot-product operation that is the primary target of our optimization and one of the main driver of cost in the attention mechanism. Our analysis establishes the conditions under which the AQUA method provides a clear performance advantage over the standard approach. The detailed proofs and derivations for the following results are provided in Appendix~\ref{app:detailed_proofs}.

First, we establish the baseline cost. In the standard auto-regressive setting, the complexity of computing the dot product between the current query and all keys in the cache is linear with respect to the sequence length \citep{vaswani2023attentionneed, tay2022efficienttransformerssurvey}.
\begin{itemize}
    \item \textbf{Standard Attention Cost:} $C_{std} = O((i+1) \cdot d_{head})$
\end{itemize}

Next, we formalize the complexity of our AQUA algorithm. This cost is composed of a fixed, one-time overhead for the vector projections ($O(d_{head}^2)$) and a variable cost for the final dot product, which scales with our reduced dimension, $k$.
\begin{itemize}
    \item \textbf{AQUA Cost:} $C_{AQUA} = O(d_{head}^2 + (i+1) \cdot k)$
\end{itemize}

By comparing these two complexities, we can derive the precise ``break-even point'' at which our method becomes more efficient. This critical result shows that for any meaningful approximation level ($k < d_{head}$), there exists a sequence length beyond which AQUA is guaranteed to be faster. The key condition is:
$$ i+1 > \frac{d_{head}^2}{d_{head} - k} $$
This inequality reveals the fundamental trade-off of our method: the fixed projection cost is amortized over the sequence, and the per-token savings of $(d_{head} - k)$ ensure that for sufficiently long contexts, our approach will always yield a net computational gain that grows with every new token.

\section{Computation and Validation of the Projection Matrix}
\label{sec:proj_matrix}

The central hypothesis of our method is that the salient information within query and key vectors can be concentrated into a smaller subset of dimensions. To achieve this, we must first find a suitable projection that transforms the original vector space into a new one where the dimensions are ordered by importance. This section details the methodology for computing such a projection matrix, $P$, and provides empirical evidence validating its effectiveness and generalizability.

\subsection{Methodology for Offline Calibration}

\subsubsection*{Motivation and the Ideal Online Approach}
Ideally, for any given set of key vectors $K_{:i+1}$ at a decoding step $i$, we would find a transformation that aligns the coordinate axes with the directions of maximum variance within that specific set of vectors. This is precisely the goal of Principal Component Analysis (PCA) \citep{MACKIEWICZ1993303}. The optimal projection matrix $P$ for this task can be found using Singular Value Decomposition (SVD) \citep{1102314} of the key cache $K_{:i+1}$.

However, this ``online'' approach is computationally infeasible. The complexity of computing SVD, $O(\min((i+1)d_{head}^2, (i+1)^2d_{head}))$  \citep{li2019tutorialcomplexityanalysissingular}, would introduce a prohibitive overhead at every decoding step, negating any potential gains. A detailed derivation of this complexity can be found in Appendix~\autoref{app:svd_proof}.

\subsubsection*{Proposed Offline Calibration Method}
\label{subsubsec:methodology_for_calibrating_P}
To circumvent this bottleneck, we propose computing a single, fixed projection matrix $P$ for each layer and head \textit{offline}. The procedure is as follows:
\begin{enumerate}
    \item \textbf{Curate a Calibration Dataset:} We select a large, representative corpus of text (e.g., \texttt{BookCorpus} \citep{Zhu_2015_ICCV}) and segment it into uniform long sequences (e.g., $N = 4096$ tokens).
    \item \textbf{Collect Activation Vectors:} We pass these sequences through the pre-trained model. For each layer and head, we collect a large number of query vectors ($q_i$) and key vectors ($k_i$) after all standard transformations have been applied.
    \item \textbf{Perform Global SVD:} For each layer and head, we concatenate the collected vectors to form a large data matrix, $D_{calib}$. We then perform SVD on this matrix: $D_{calib} = U\Sigma V^T$ \citep{1102314}.
    \item \textbf{Store the Projection Matrix:} The resulting matrix $V \in \mathbb{R}^{d_{head} \times d_{head}}$ contains the principal components that capture the most significant directions of variance across the entire calibration dataset. We save this matrix as the fixed projection matrix $P$ for that specific layer and head.
\end{enumerate}
By pre-computing $P$ offline, the expensive SVD operation is eliminated from the inference loop, leaving only the cost of two efficient vector-matrix multiplications.

\begin{figure}[t!]
\centering
\includegraphics[width=0.47\textwidth]{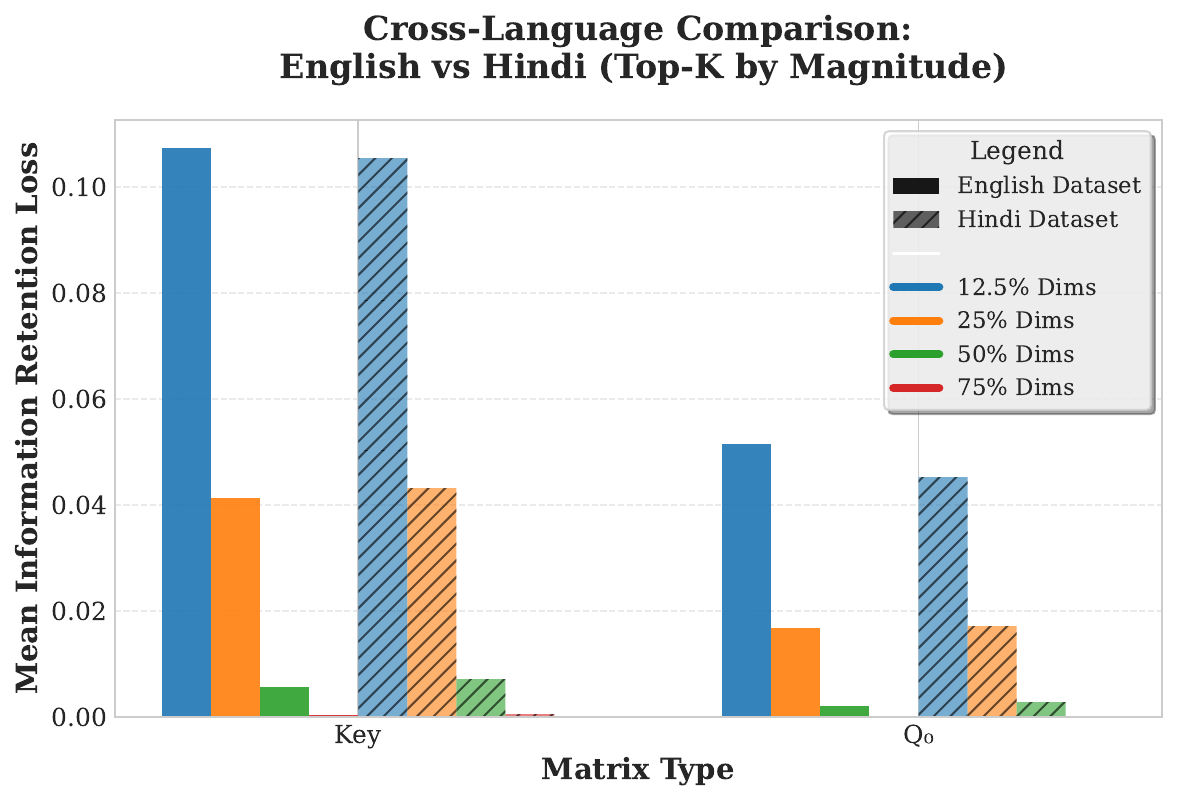} 
\caption{Cross-lingual comparison of mean information retention loss for the Key and first Query ($Q_0$) matrices. The similar loss profiles demonstrate the robustness of the English-calibrated projection matrix when applied to a different language and script.}
\label{fig:language_comparison_subset}
\vspace{-1em}
\end{figure}

\subsection{Empirical Validation of the Offline Approach}

To validate our offline calibration method, we must show that it does not introduce a significant loss of information compared to the ideal online approach. We quantify this using a metric we call as \textbf{Information Retention Loss}, $\mathcal{L}_{info}$ \citep{article, Greenacre_2022_PCA}. For an original vector $v \in \mathbb{R}^{d_{head}}$, its projected counterpart $\hat{v} = vP$, and a set of $k$ selected indices $I_k \subset \{1, \dots, d_{head}\}$, the loss is the normalized difference between the original vector's norm and the norm of its retained components: $ \mathcal{L}_{info}(v, \hat{v}, I_k) = \frac{\big| \|v\|_2 - \|\hat{v}[I_k]\|_2 \big|}{\|v\|_2} $

where $\hat{v}[I_k]$ denotes the vector containing only the components of $\hat{v}$ at the indices in $I_k$. A lower $\mathcal{L}_{info}$ indicates that the truncated, projected vector better preserves the ``energy'' of the original \citep{Greenacre_2022_PCA}.

Figure \ref{fig:dumbbell_comparison} compares the information retention loss under two conditions: using an online projection matrix computed from the \textbf{Same Matrix} versus our offline matrix calibrated on a \textbf{Different Dataset} (\texttt{BookCorpus} \citep{Zhu_2015_ICCV}, evaluated on \texttt{WikiText} \citep{merity2016pointer}). The results compellingly show that the loss incurred by using our pre-calibrated matrix is only marginally higher than that of the ideal, but impractical, online SVD. This validates that our offline approach is a highly effective and efficient proxy.

\begin{table*}[t!]
\centering
\caption{
    A summary of performance for \texttt{Llama-3.1-8B-Instruct} and \texttt{OLMoE-1B-7B-Instruct} under key levels of AQUA pruning. `B' denotes the baseline ($k_{ratio}=1.0$), with results shown in \textbf{bold}. The full table is in Appendix~\ref{app:full_standalone_results}.
}
\label{tab:pruning_results_percent_condensed}
\resizebox{\textwidth}{!}{%
\begin{tabular}{@{}llccccccc@{}}
\toprule
\multirow{2}{*}{Model} & \multirow{2}{*}{\shortstack{$k_{ratio}$}} & MMLU & GSM8K & HellaSwag & WinoGrande & \shortstack{TruthfulQA\\MC2} & \shortstack{ARC\\Challenge} & WikiText \\
\cmidrule(l){3-9} 
& & (acc $\uparrow$) & (acc $\uparrow$) & (acc $\uparrow$) & (acc $\uparrow$) & (acc $\uparrow$) & (acc $\uparrow$) & (ppl $\downarrow$) \\ 
\midrule
\multirow{4}{*}{\shortstack{Llama-3.1-8B \\ Instruct}} 
 & \textbf{B} & $\bm{0.687 \pm 0.004}$ & $\bm{0.816 \pm 0.011}$ & $\bm{0.785 \pm 0.004}$ & $\bm{0.755 \pm 0.012}$ & $\bm{0.551 \pm 0.016}$ & $\bm{0.647 \pm 0.014}$ & $\bm{8.910}$ \\
 & 0.75 & $0.685 \pm 0.004$ & $0.805 \pm 0.011$ & $0.785 \pm 0.004$ & $0.757 \pm 0.012$ & $0.551 \pm 0.016$ & $0.645 \pm 0.014$ & $8.930$ \\
 & 0.50 & $0.666 \pm 0.004$ & $0.720 \pm 0.012$ & $0.780 \pm 0.004$ & $0.738 \pm 0.012$ & $0.551 \pm 0.016$ & $0.620 \pm 0.014$ & $9.200$ \\
 & 0.30 & $0.507 \pm 0.004$ & $0.146 \pm 0.010$ & $0.732 \pm 0.004$ & $0.598 \pm 0.014$ & $0.502 \pm 0.016$ & $0.530 \pm 0.015$ & $12.550$ \\
\midrule
\multirow{4}{*}{\shortstack{OLMoE-1B-7B \\ Instruct}} 
 & \textbf{B} & $\bm{0.530 \pm 0.004}$ & $\bm{0.451 \pm 0.014}$ & $\bm{0.783 \pm 0.004}$ & $\bm{0.673 \pm 0.013}$ & $\bm{0.491 \pm 0.016}$ & $\bm{0.532 \pm 0.015}$ & $\bm{11.340}$ \\
 & 0.75 & $0.529 \pm 0.004$ & $0.453 \pm 0.014$ & $0.783 \pm 0.004$ & $0.669 \pm 0.013$ & $0.485 \pm 0.016$ & $0.542 \pm 0.015$ & $11.330$ \\
 & 0.50 & $0.526 \pm 0.004$ & $0.426 \pm 0.014$ & $0.778 \pm 0.004$ & $0.658 \pm 0.013$ & $0.488 \pm 0.016$ & $0.540 \pm 0.015$ & $11.340$ \\
 & 0.30 & $0.485 \pm 0.004$ & $0.252 \pm 0.012$ & $0.747 \pm 0.004$ & $0.615 \pm 0.014$ & $0.481 \pm 0.016$ & $0.513 \pm 0.015$ & $12.140$ \\
\bottomrule
\end{tabular}%
}
\end{table*}

\subsection{Generalizability and Extension to GQA}

\subsubsection*{Cross-Lingual Generalizability}
A key question is whether the learned projection matrix is language-agnostic or if it overfits to the linguistic properties of the calibration data. To test this, we applied our projection matrix, calibrated on English text (\texttt{BookCorpus} \citep{Zhu_2015_ICCV}), to query and key vectors generated from a dataset in a completely different script: Hindi (\texttt{wikipedia-hi} \citep{zicsx_wikipedia_hindi}).

As shown in Figure \ref{fig:language_comparison_subset}, the information retention loss profiles for English and Hindi are remarkably similar. The surprising lack of degradation strongly suggests that the principal components we capture are not tied to a specific language but rather reflect fundamental, language-agnostic properties of the attention heads themselves. A detailed discussion of the experimental design and the full results across all matrices for the GQA group are provided in Appendix ~\autoref{app:detailed_cross_linguality}.

\subsubsection*{Extension to Grouped-Query Attention (GQA)}
Our method naturally extends to modern architectures like \texttt{Llama-3.1-8B-Instruct} that use Grouped-Query Attention (GQA) \citep{grattafiori2024llama3herdmodels}, where a group of $N_Q$ query heads shares a single key head. To create a shared projection matrix for the group, we must capture the collective variance of all constituent heads. Let $D_{q_j} \in \mathbb{R}^{M \times d_{head}}$ be the matrix of $M$ query vectors collected for the $j$-th query head in the group, and let $D_k \in \mathbb{R}^{M \times d_{head}}$ be the matrix of corresponding vectors from the shared key head. The group's calibration matrix, $D_{calib}^{GQA}$, is formed by vertically stacking these individual matrices:
$$ D_{\text{calib}}^{\text{GQA}} = \begin{pmatrix} D_{q_1} & D_{q_2} & \dots & D_{q_{N_Q}} & D_k \end{pmatrix}^T \in \mathbb{R}^{((N_Q+1)M) \times d_{\text{head}}} $$
Performing SVD on this combined matrix ($D_{calib}^{GQA} = U\Sigma V^T$) yields a single projection matrix $P = V$ for the entire group. This approach not only reduces the memory required for storing projection matrices but also ensures the projection is informed by the shared statistical properties of the group \citep{chen2024naclgeneraleffectivekv}. The analysis in Figure \ref{fig:dumbbell_comparison} was conducted on such a GQA group (Layer 0, Head 0, with $N_Q=4$), where the four query matrices ($Q_0$ to $Q_3$) and the key matrix all use this shared projection.

\subsubsection{Rotational Invariance of Attention Scores}

A crucial property of using an orthogonal matrix for projection is that the projection itself is a lossless rotation. It does not alter the underlying dot product scores. This means that the approximation error in our method is introduced only by the subsequent truncation of dimensions (i.e., selecting the top-$k$), not by the initial projection. A lemma to formalize this property can be found in Appendix ~\autoref{app:rotational_invariance}.

\vspace{-0.5em}

\section{Justification for Magnitude-Based Dimension Selection}

The results in Figure \ref{fig:dumbbell_comparison} not only validate our offline approach but also reveal a second, more critical insight: the method used to select the $k$ dimensions after projection has a profound impact on performance. A naive approach would be to simply slice the first $k$ dimensions, assuming they inherently contain the most information. Our analysis demonstrates that a dynamic, magnitude-based selection is vastly superior.

\vspace{-0.5em}

\subsection{The Flaw in Naive Slicing: A Mismatch of Importance}
The core issue with naive slicing is that it conflates two different notions of ``importance'': global variance versus token-specific activity. PCA identifies dimensions that are important \textit{globally} by capturing the most variance across an entire dataset. However, for any individual query, the most important dimensions are those that are most ``active'' for that specific token, which is best measured by their magnitude \citep{ashkboos2024slicegptcompresslargelanguage}.

Our empirical analysis confirms this mismatch. We found that the overlap between the top dimensions by magnitude and the leading principal components is often surprisingly low. For instance, selecting the top 12.5\% of dimensions by magnitude does not guarantee they are captured even within the top 25\% of principal components. This directly shows that the most active dimensions for a given token are not necessarily the first few principal components. These results can be found in ~\autoref{fig:key_query_overlap} as a part of Appendix~\ref{app:magnitude_vs_pca}, where we detailed methodology and the full overlap analysis.

\subsection{Magnitude Selection Halves the Information Loss}
The practical consequence of this mismatch is evident in Figure \ref{fig:dumbbell_comparison}. When comparing the two selection methods, ``Top-K by Dimension (Slicing)'' versus ``Top-K by Magnitude'', the information retention loss is consistently reduced by approximately half when using our magnitude-based approach. By dynamically selecting the most active dimensions for each specific vector, we preserve the vector's energy far more effectively than a static slicing strategy could. This provides a clear and compelling justification for the central mechanism of our AQUA algorithm.

\begin{table*}[t!]
\centering
\caption{
    Performance of \texttt{Llama-3.1-8B-Instruct} \citep{grattafiori2024llama3herdmodels} using the synergistic \textbf{AQUA-H2O} attention mechanism.
    The table shows results while tuning the H2O Heavy Hitter Ratio ($H2O_{ratio}$) \citep{zhang2023h2oheavyhitteroracleefficient} and the AQUA Ratio ($k_{ratio}$). The baseline H2O performance ($H2O_{ratio}=1.00$) is denoted by `B'.
    Performance is measured by accuracy (acc, higher is better) and perplexity (ppl, lower is better).
    Values are reported as $mean \pm standard\_error$.
}
\label{tab:qma-h2o_results_percent}
\resizebox{\textwidth}{!}{%
\begin{tabular}{@{}ccccccccc@{}}
\toprule
\multicolumn{2}{c}{Hyperparameters} & \multicolumn{7}{c}{Benchmark Performance} \\
\cmidrule(r){1-2} \cmidrule(l){3-9}
$H2O_{ratio}$ & $k_{ratio}$ & MMLU & GSM8K & HellaSwag & WinoGrande & \shortstack{TruthfulQA\\MC2} & \shortstack{ARC\\Challenge} & WikiText \\
 & & (acc $\uparrow$) & (acc $\uparrow$) & (acc $\uparrow$) & (acc $\uparrow$) & (acc $\uparrow$) & (acc $\uparrow$) & (ppl $\downarrow$) \\
\midrule
\multirow{4}{*}{0.25} 
 & 0.30 & $0.512 \pm 0.004$ & $0.133 \pm 0.009$ & $0.736 \pm 0.004$ & $0.590 \pm 0.014$ & $0.530 \pm 0.016$ & $0.532 \pm 0.015$ & $12.780$ \\
 & 0.50 & $0.666 \pm 0.004$ & $0.708 \pm 0.013$ & $0.782 \pm 0.004$ & $0.743 \pm 0.012$ & $0.557 \pm 0.016$ & $0.617 \pm 0.014$ & $9.250$ \\
 & 0.75 & $0.684 \pm 0.004$ & $0.798 \pm 0.011$ & $0.787 \pm 0.004$ & $0.754 \pm 0.012$ & $0.556 \pm 0.016$ & $0.644 \pm 0.014$ & $8.950$ \\
 & 1.00 & $0.686 \pm 0.004$ & $0.795 \pm 0.011$ & $0.786 \pm 0.004$ & $0.762 \pm 0.012$ & $0.559 \pm 0.016$ & $0.651 \pm 0.014$ & $8.930$ \\
\midrule
\multirow{4}{*}{0.50} 
 & 0.30 & $0.510 \pm 0.004$ & $0.147 \pm 0.010$ & $0.732 \pm 0.004$ & $0.591 \pm 0.014$ & $0.504 \pm 0.016$ & $0.537 \pm 0.015$ & $12.560$ \\
 & 0.50 & $0.667 \pm 0.004$ & $0.707 \pm 0.013$ & $0.780 \pm 0.004$ & $0.738 \pm 0.012$ & $0.553 \pm 0.016$ & $0.619 \pm 0.014$ & $9.210$ \\
 & 0.75 & $0.684 \pm 0.004$ & $0.788 \pm 0.011$ & $0.785 \pm 0.004$ & $0.755 \pm 0.012$ & $0.552 \pm 0.016$ & $0.643 \pm 0.014$ & $8.930$ \\
 & 1.00 & $0.686 \pm 0.004$ & $0.801 \pm 0.011$ & $0.785 \pm 0.004$ & $0.759 \pm 0.012$ & $0.552 \pm 0.016$ & $0.648 \pm 0.014$ & $8.910$ \\
\midrule
\multirow{4}{*}{0.75} 
 & 0.30 & $0.504 \pm 0.004$ & $0.108 \pm 0.009$ & $0.733 \pm 0.004$ & $0.606 \pm 0.014$ & $0.503 \pm 0.016$ & $0.530 \pm 0.015$ & $12.550$ \\
 & 0.50 & $0.663 \pm 0.004$ & $0.724 \pm 0.012$ & $0.781 \pm 0.004$ & $0.736 \pm 0.012$ & $0.553 \pm 0.016$ & $0.622 \pm 0.014$ & $9.200$ \\
 & 0.75 & $0.685 \pm 0.004$ & $0.794 \pm 0.011$ & $0.785 \pm 0.004$ & $0.754 \pm 0.012$ & $0.551 \pm 0.016$ & $0.645 \pm 0.014$ & $8.930$ \\
 & 1.00 & $0.687 \pm 0.004$ & $0.798 \pm 0.011$ & $0.785 \pm 0.004$ & $0.756 \pm 0.012$ & $0.551 \pm 0.016$ & $0.646 \pm 0.014$ & $8.910$ \\
\midrule
\multirow{4}{*}{1.00 (B)} 
 & 0.30 & $0.507 \pm 0.004$ & $0.146 \pm 0.010$ & $0.732 \pm 0.004$ & $0.598 \pm 0.014$ & $0.502 \pm 0.016$ & $0.530 \pm 0.015$ & $12.550$ \\
 & 0.50 & $0.666 \pm 0.004$ & $0.720 \pm 0.012$ & $0.780 \pm 0.004$ & $0.738 \pm 0.012$ & $0.551 \pm 0.016$ & $0.620 \pm 0.014$ & $9.200$ \\
 & 0.75 & $0.685 \pm 0.004$ & $0.805 \pm 0.011$ & $0.785 \pm 0.004$ & $0.757 \pm 0.012$ & $0.551 \pm 0.016$ & $0.645 \pm 0.014$ & $8.930$ \\
 & 1.00 & $0.687 \pm 0.004$ & $0.816 \pm 0.011$ & $0.785 \pm 0.004$ & $0.755 \pm 0.012$ & $0.551 \pm 0.016$ & $0.647 \pm 0.014$ & $8.910$ \\
\bottomrule
\end{tabular}%
}
\end{table*}

\section{Empirical Evaluation and Results}

To validate the effectiveness of our proposed methods, we conduct a comprehensive empirical evaluation on a suite of standard benchmarks. This section details our experimental setup and presents a thorough analysis of the results, demonstrating the performance of AQUA both as a standalone method and in conjunction with other optimization techniques.

\subsection{Models and Benchmarks}
We evaluate our methods on two prominent open-source models: \texttt{meta-llama/Llama-3.1-8B-Instruct} \citep{grattafiori2024llama3herdmodels}, a state-of-the-art model known for its strong performance, and \texttt{OLMoE-1B-7B-Instruct} \citep{muennighoff2025olmoeopenmixtureofexpertslanguage}, a Mixture-of-Experts model, to demonstrate generalizability.

Performance is measured across a diverse suite of standard benchmarks including MMLU \citep{wang2024mmluprorobustchallengingmultitask}, GSM8K \citep{cobbe2021trainingverifierssolvemath}, HellaSwag \citep{zellers2019hellaswag}, WinoGrande \citep{sakaguchi2019winograndeadversarialwinogradschema}, ARC Challenge \citep{clark2018thinksolvedquestionanswering}, TruthfulQA \citep{figueras2025truthknowslanguageevaluating}, and WikiText-103 \citep{merity2016pointer} using the EleutherAI \texttt{lm-evaluation-harness} \citep{eval-harness}. A detailed description of each benchmark, its purpose, and the rationale for our chosen few-shot evaluation settings are provided in Appendix ~\autoref{app:benchmark_setup}.

\subsubsection*{Hyperparameter Notation}
To maintain consistency, we define our primary hyperparameter, the \textbf{AQUA Ratio ($k_{ratio}$)}, as the fraction of dimensions retained for the attention computation after projection. For instance, a $k_{ratio}$ of 0.75 means that 75\% of the dimensions with the highest magnitudes are kept.
\vspace{-0.2em}
\subsection{Standalone AQUA Performance}
Our first experiment evaluates AQUA as a direct replacement for standard attention. We apply varying levels of pruning by adjusting the $k_{ratio}$ and observe the impact on model performance. Table \ref{tab:pruning_results_percent_condensed} presents a summary of these results, with the full, unabridged table available in Appendix~\ref{app:full_standalone_results}.

\textbf{Analysis.} The results reveal a clear and graceful trade-off between computational efficiency and model performance. For Llama-3.1-8B, we observe a remarkable ``sweet spot'': retaining 75\% of the dimensions ($k_{ratio}=0.75$) results in a negligible performance drop across all benchmarks. This demonstrates that a 25\% reduction in the computational cost of the attention dot product can be achieved with virtually no loss in model quality.

Interestingly, the OLMoE model, which uses standard Multi-Head Attention (MHA), exhibits a more gradual performance degradation compared to the GQA-based Llama-3.1. This can be attributed to the architectural differences; in GQA, a single key must retain information for multiple queries, leading to denser, less sparse key vectors as observed in our earlier analysis. Consequently, pruning dimensions from these dense keys has a more pronounced effect. In contrast, the dedicated keys in MHA are naturally sparser, making them more resilient to pruning. As pruning becomes more aggressive ($k_{ratio} \leq 0.50$), performance on both models begins to degrade, particularly on complex reasoning tasks like GSM8K, before collapsing at very low ratios.

\begin{table*}[t!]
\centering
\caption{
    Performance of \texttt{Llama-3.1-8B-Instruct} with the \textbf{AQUA-Memory} mechanism on key benchmarks. The full results, including additional benchmarks, are available in Appendix~\ref{app:full_aqua_memory_results}. Baseline is in \textbf{bold}.
}
\label{tab:qma-memory-results-main}
\small 
\begin{tabular}{@{}lcccccccc@{}}
\toprule
& \multicolumn{3}{c}{Hyperparameters} & \multicolumn{4}{c}{Key Benchmark Performance} \\
\cmidrule(lr){2-4} \cmidrule(l){5-8}
Attn. Type & $s_{ratio}$ & $k_{ratio}$ & $E_{ratio}$ & MMLU & GSM8K & HellaSwag & WikiText \\
& & & & (acc $\uparrow$) & (acc $\uparrow$) & (acc $\uparrow$) & (ppl $\downarrow$) \\ 
\midrule
Full Attn. & --- & --- & 1.000 & $\mathbf{0.687 \pm 0.004}$ & $\mathbf{0.816 \pm 0.011}$ & $\mathbf{0.785 \pm 0.004}$ & $\mathbf{8.910}$ \\
\midrule
\multirow{6}{*}{\shortstack{AQUA+\\Memory}} 
 & \multirow{3}{*}{0.10} & 0.75 & 0.675 & $0.669 \pm 0.004$ & $0.737 \pm 0.012$ & $0.781 \pm 0.004$ & $9.140$ \\
 & & 0.90 & 0.810 & $0.674 \pm 0.004$ & $0.748 \pm 0.012$ & $0.780 \pm 0.004$ & $9.100$ \\
 & & 1.00 & 0.900 & $0.675 \pm 0.004$ & $0.756 \pm 0.012$ & $0.781 \pm 0.004$ & $9.100$ \\
\cmidrule(l){2-8}
 & \multirow{3}{*}{0.25} & 0.75 & 0.563 & $0.602 \pm 0.004$ & $0.413 \pm 0.014$ & $0.755 \pm 0.004$ & $10.200$ \\
 & & 0.90 & 0.675 & $0.610 \pm 0.004$ & $0.433 \pm 0.014$ & $0.755 \pm 0.004$ & $10.090$ \\
 & & 1.00 & 0.750 & $0.609 \pm 0.004$ & $0.438 \pm 0.014$ & $0.755 \pm 0.004$ & $10.080$ \\
\bottomrule
\end{tabular}
\end{table*}

\vspace{-0.4em}
\subsection{Synergy with Token Eviction: AQUA-H2O}
A key claim of our work is that AQUA serves as a general-purpose accelerator for other KV-cache optimization techniques. To demonstrate this, we integrate AQUA with H2O, a prominent token eviction strategy. H2O identifies and retains a budget of ``Heavy Hitter'' tokens based on their accumulated attention scores. In a standard implementation, this requires computing the full attention matrix first. In our hybrid \textbf{AQUA-H2O} approach, we first use AQUA to compute an \textit{approximate} attention score matrix very efficiently. These approximate scores are then used to identify the Heavy Hitters for H2O's eviction policy \citep{zhang2023h2oheavyhitteroracleefficient}, thus accelerating the eviction process. So, in the same way we can integrate AQUA with any token eviction methods to accelerate their compute efficiency.

\textbf{Analysis.} The results for combining H2O token eviction with AQUA pruning are demonstrated in Table \ref{tab:qma-h2o_results_percent}. The configuration where $H2O_{ratio}=1.00$ is equivalent to the standalone AQUA model, serving as our baseline. The most compelling results emerge when aggressive token eviction is paired with moderate AQUA pruning. For instance, with an $H2O_{ratio}$ of $0.50$ (evicting half the tokens) and a $k_{ratio}$ of $0.75$, the model's performance across all benchmarks is nearly identical to that of the full, unmodified baseline. This powerfully demonstrates that we can achieve massive reduction in latency and computation (from both eviction and AQUA) while preserving model performance. The results for the \texttt{OLMoE-1B-7B-Instruct} model, which show a similar trend, are available in Appendix~\ref{app:qma_h2o_olmoe}.

\subsection{Combined Compute and Memory Savings: AQUA-Memory}
Finally, we introduce \textbf{AQUA-Memory}, a variant of our method designed to directly and simultaneously reduce both KV-cache memory and computational load. This approach employs a two-stage pruning strategy:

\begin{enumerate}
    \item \textbf{Static Memory Pruning:} First, we permanently discard a fraction of the dimensions corresponding to the lowest-importance principal components (i.e., the last columns of the projection matrix $P$) before the key and value vectors are written to the KV-cache. This yields a direct and predictable saving in memory, controlled by a slice ratio ($S_{ratio}$) hyperparameter representing the fraction of dimensions removed.

    \item \textbf{Dynamic Compute Pruning:} On the remaining, reduced-dimension vectors, we then apply our standard dynamic magnitude selection. The $k_{ratio}$ hyperparameter is applied to this smaller set of dimensions to further reduce the computational cost of the attention dot product.
\end{enumerate}

The total effective reduction is captured by the \textbf{Effective Ratio ($E_{ratio}$)}, which represents the final fraction of the original dimensions used in the attention calculation. It is defined as $ Eff_{ratio} = (1 - slice_{ratio}) \times k_{\text{ratio}} $
This dual-pruning mechanism provides a powerful framework for navigating the trade-off between memory, compute, and model performance.

\textbf{Analysis.} Table \ref{tab:qma-memory-results-main} presents the results of this direct memory and compute trade-off. By slicing off just 10\% of the dimensions before caching ($slice_{ratio}=0.10$), we achieve a 10\% reduction in KV-cache memory. When we then apply a gentle compute reduction on the remaining dimensions ($k_{ratio}=0.90$), the performance drop is minimal, with perplexity only increasing to 9.10 from 8.91. This result is highly significant, as it provides a direct, controllable method for reducing the KV-cache size with a predictable and graceful degradation in performance. As expected, a more aggressive memory slice ($slice_{ratio}=0.25$) leads to a more substantial performance hit, establishing a clear boundary for this technique.

\section{Conclusion}

In this work, we addressed the critical efficiency bottleneck of the Transformer attention mechanism by introducing AQUA, a novel approximation strategy that reduces computational and memory load. Our approach is centered on a simple yet powerful insight: by projecting query and key vectors into a new coordinate space, we can dynamically select a small subset of the most salient dimensions based on their magnitude, achieving significant efficiency gains with a remarkably graceful performance trade-off.

We have demonstrated that our method, using an offline-calibrated and language-agnostic projection matrix, can reduce the core attention computation by 25\% with negligible impact on performance across a wide range of standard benchmarks. Furthermore, we have shown its versatility, proving it can function effectively as a standalone mechanism, as a computational accelerator for existing token eviction strategies like H2O \citep{zhang2023h2oheavyhitteroracleefficient}, and as a direct method for reducing KV-cache memory. Our theoretical analysis provides a clear understanding of the computational break-even point, confirming that the benefits of our method grow with sequence length.

The primary trade-off of our approach is the initial projection overhead, which makes it most suitable for the long-sequence regimes where attention optimization is most critical. The retention ratio, $k_{ratio}$, serves as a controllable hyperparameter, empowering practitioners to tune the balance between efficiency and accuracy to fit their specific application needs. This work opens several exciting avenues for future research, most notably the development of adaptive methods that could learn to set this ratio dynamically based on the context. Further exploration into applying the AQUA framework to other modalities, such as Vision Transformers, and combining it with complementary techniques like quantization, promises to further broaden its impact, making powerful models more efficient and accessible.

\newpage

\bibliography{main}
\bibliographystyle{tmlr}

\newpage

\appendix

\section{Appendix}

\subsection{A Broader Survey of Related Work}
\label{app:related_work_survey}

The challenge of optimizing Large Language Model (LLM) inference has spanned a wide array of research, primarily focused on mitigating the quadratic complexity of the attention mechanism. While some approaches have targeted other architectural components, such as pruning MLP layers \citep{kiefer2024comparativestudypruningmethods, ding2025adaptivepruningpretrainedtransformer}, our focus lies with methods that address the attention bottleneck. These works can be broadly categorized into quantization techniques, attention approximation methods, and token eviction strategies, each targeting a different layer of the efficiency problem.

\subsection*{Quantization Techniques}
\textbf{Quantization} is a well-established method for model compression that reduces the numerical precision of model weights, activations, or the KV-cache itself. Works such as KIVI \citep{10.5555/3692070.3693381} and KV-Quant \citep{hooper2025kvquant10millioncontext} have demonstrated that the precision of the Key and Value matrices can be reduced to as low as 2-bits with minimal performance degradation. These methods are largely orthogonal to the structural changes proposed in other works and can often be applied in conjunction with them to achieve cumulative efficiency gains.

\subsection*{Attention Approximation Techniques}
This line of research seeks to reduce computational cost by approximating the attention mechanism, rather than computing it in its entirety. These methods often leverage the observation that the attention matrix or its constituent components exhibit low-rank or sparse properties.

\textbf{Low-Rank Matrix Approximations.} Early works like Linformer \citep{wang2020linformerselfattentionlinearcomplexity} and Reformer \citep{kitaev2020reformerefficienttransformer} established that the attention score matrix is often low-rank or can be approximated using locality-sensitive hashing to reduce complexity from quadratic to near-linear. These methods typically require architectural changes and retraining, making them less applicable to pre-trained models.

\textbf{KV-Cache Dimensionality Reduction.} A more recent approach focuses on compressing the Key and Value vectors within the KV-cache. A notable example is EigenAttention \citep{saxena2024eigenattentionattentionlowrank}, which posits that the K and V activations lie in a low-dimensional subspace. It compresses the KV-cache by decomposing the projection weights for K and V into low-rank factors, thereby reducing the stored dimension $d_{head}$. While effective at reducing memory, this approach has two primary limitations. First, the compression rank is a fixed hyperparameter that must be decided offline; it cannot be dynamically adjusted at runtime. Second, the method does not provide a theoretical bound on its performance trade-offs. Our work is similar in its goal of reducing dimensionality but differs by operating on projected vectors at runtime and providing a formal analysis of its computational benefits.

\subsection*{Token Eviction Techniques}
A popular and effective strategy for managing long contexts is to prune the KV-cache by evicting tokens deemed less important. This directly reduces both memory usage and the computational cost of the attention calculation.

\textbf{Token Eviction.} These methods identify and permanently discard tokens from the KV-cache. A seminal work in this area, H2O \citep{zhang2023h2oheavyhitteroracleefficient}, identifies ``Heavy Hitter'' tokens by monitoring their accumulated attention scores over time. Its eviction policy dynamically retains a balance of these important H2 tokens alongside the most recent tokens. While highly effective at reducing the memory and compute footprint, this approach risks the permanent loss of information, which can lead to a non-trivial degradation in model quality if important, but not immediately ``heavy-hitting'' tokens are discarded.

\textbf{Hybrid and Magnitude-Based Approaches.} More recent works have explored more nuanced strategies that combine approximation with token selection. SparQ Attention \citep{ribar2024sparqattentionbandwidthefficientllm} uses the high-magnitude dimensions of a query to compute approximate attention scores, which are then used to select a subset of ``top'' keys. Full-rank attention is then computed for only this subset. While conceptually similar to our approach in its use of query magnitudes, SparQ \citep{ribar2024sparqattentionbandwidthefficientllm} has notable drawbacks: it requires costly non-contiguous column indexing of the key vectors and stores two copies of past keys to maintain efficiency, increasing the memory footprint by 50\%.

Another related work, LoKi Attention \citep{singhania2024lokilowrankkeysefficient}, uses an offline-computed projection matrix to transform query and key vectors and then truncates them by simply slicing off the trailing dimensions. Based on the resulting approximate attention scores, it temporarily drops tokens for the subsequent computation. However, LoKi does not permanently evict tokens from the cache, thereby forgoing memory savings for the sake of preserving information. Crucially, its strategy of statically slicing the first few dimensions after projection differs from our dynamic, magnitude-based selection, a distinction we empirically justify in Section 4.

\subsection{A Primer on SVD for PCA}
Singular Value Decomposition factorizes any matrix $A \in \mathbb{R}^{m \times n}$ into three matrices:
\begin{equation}
    A = U \Sigma V^T
\end{equation}
where $U \in \mathbb{R}^{m \times m}$ and $V \in \mathbb{R}^{n \times n}$ are orthogonal matrices, and $\Sigma \in \mathbb{R}^{m \times n}$ is a diagonal matrix containing the singular values. The columns of $V$ are the principal components (or principal directions) of the row-space of $A$. Projecting the rows of $A$ onto the first few columns of $V$ concentrates the maximum possible variance (i.e., ``energy'' or ``information'') into those new dimensions.

\subsection{Derivation of SVD Computational Complexity}
\label{app:svd_proof}

In our analysis, we state that the computational complexity of performing a full Singular Value Decomposition (SVD) on the key cache matrix $K_{:i+1}$ is $O(\min((i+1)d_{head}^2, (i+1)^2d_{head}))$. This appendix provides a brief derivation for this standard result from numerical linear golub 2013 matrix
\citep{Golub_2013_MC}.

Let the matrix in question be $A \in \mathbb{R}^{m \times n}$, where, in our context, $m = i+1$ (the sequence length) and $n = d_{head}$ (the head dimension). The SVD of $A$ is given by $A = U\Sigma V^T$. Standard algorithms for computing the full SVD typically rely on first finding the eigenvalues and eigenvectors of either $A^T A$ or $A A^T$. The choice between these two paths depends on which of the two matrices is smaller, as that determines the more efficient route.

\subsubsection*{Path 1: Eigendecomposition of $A^T A$}
This path is generally preferred when $m \ge n$ (i.e., when the sequence length is greater than or equal to the head dimension, which is the common case in LLMs).

\begin{enumerate}
    \item \textbf{Form the covariance matrix $A^T A$:}
    \begin{itemize}
        \item The dimensions are $(n \times m) \times (m \times n) = (n \times n)$.
        \item The computational cost of this matrix multiplication is $O(n^2 m)$.
    \end{itemize}
    \item \textbf{Perform eigendecomposition on $A^T A$:}
    \begin{itemize}
        \item We solve $(A^T A)V = V\Lambda$, where $\Lambda$ is the diagonal matrix of eigenvalues and the columns of $V$ are the eigenvectors.
        \item The cost of eigendecomposition for an $n \times n$ matrix is typically $O(n^3)$.
    \end{itemize}
\end{enumerate}
The total complexity for this path is the sum of these steps, $O(n^2 m + n^3)$. Since we assume $m \ge n$, the dominant term is $O(n^2 m)$. Substituting our original variable names, this complexity is $O(d_{head}^2 (i+1))$.

\subsubsection*{Path 2: Eigendecomposition of $A A^T$}
This path is more efficient when $n > m$ (i.e., when the head dimension is larger than the sequence length, a less common scenario).

\begin{enumerate}
    \item \textbf{Form the covariance matrix $A A^T$:}
    \begin{itemize}
        \item The dimensions are $(m \times n) \times (n \times m) = (m \times m)$.
        \item The computational cost of this matrix multiplication is $O(m^2 n)$.
    \end{itemize}
    \item \textbf{Perform eigendecomposition on $A A^T$:}
    \begin{itemize}
        \item We solve $(A A^T)U = U\Lambda$.
        \item The cost of eigendecomposition for an $m \times m$ matrix is typically $O(m^3)$.
    \end{itemize}
\end{enumerate}
The total complexity for this path is $O(m^2 n + m^3)$. Since we assume $n > m$, the dominant term is $O(m^2 n)$. Substituting our original variable names, this complexity is $O((i+1)^2 d_{head})$.

\subsubsection*{Overall Complexity}
An efficient SVD algorithm will internally choose the more optimal of these two paths based on the matrix dimensions. Therefore, the overall computational complexity is the minimum of the dominant costs from each path.
$$ \text{Complexity} = O(\min(n^2 m, m^2 n)) $$
Substituting $m = i+1$ and $n = d_{head}$, we arrive at the complexity cited in the main text:
$$ \text{Complexity} = O(\min(d_{head}^2 (i+1), (i+1)^2 d_{head})) $$

\begin{figure*}[!t]
\centering
\includegraphics[width=1.0\textwidth]{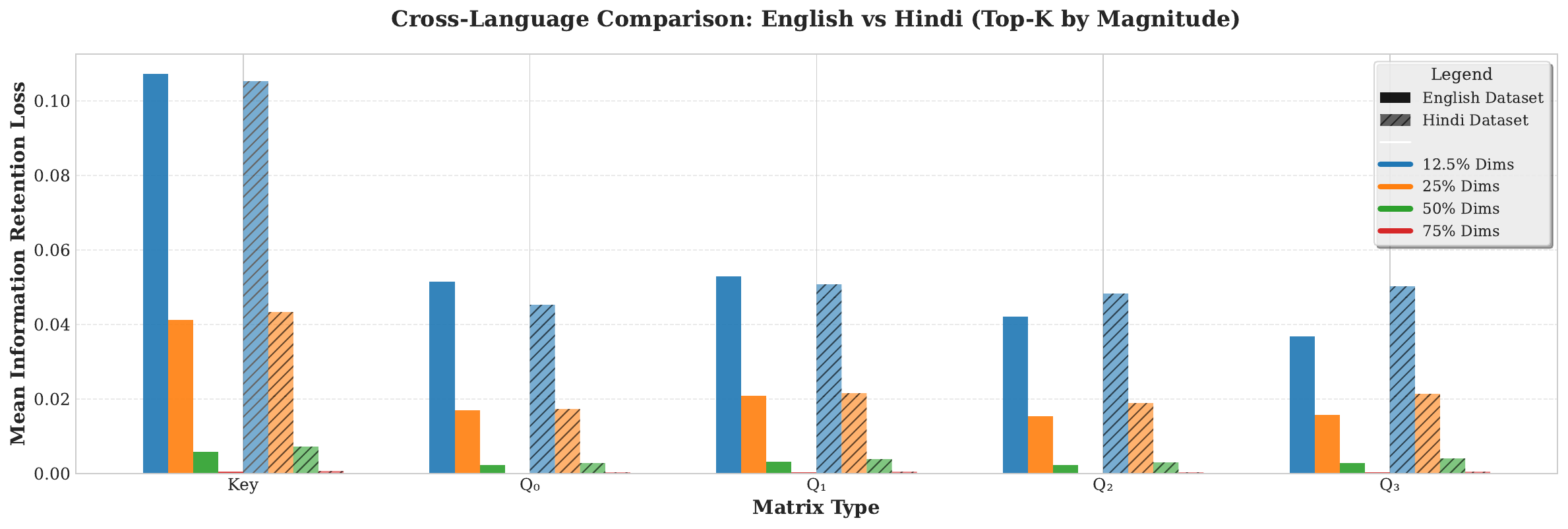}
\caption{Cross-lingual comparison of mean information retention loss using an English-calibrated projection matrix on English (\texttt{WikiText}) and Hindi (\texttt{wikipedia-hi}) datasets for a full GQA group (1 Key and 4 corresponding Query matrices). The similar loss profiles across all query and key matrices demonstrate that the projection matrix generalizes well across languages with different scripts, indicating it captures fundamental, language-agnostic properties of the attention heads.}
\label{fig:language_comparison_full}
\vspace{-1em}
\end{figure*}

\subsection{Detailed Theoretical Analysis and Proofs}
\label{app:detailed_proofs}

This appendix provides the detailed proofs for the complexity results presented in Section~\ref{sec:analysis_results}. The analysis focuses on the computational cost required to produce the unnormalized attention scores, as this is the primary stage targeted by our optimization.

\begin{proposition}[Complexity of Standard Attention]
\label{prop:std_attention}
The computational complexity of the unnormalized score calculation in standard auto-regressive attention for a single head at token $i+1$ is $C_{std} = O((i+1) \cdot d_{head})$ \citep{vaswani2023attentionneed, tay2022efficienttransformerssurvey}.
\end{proposition}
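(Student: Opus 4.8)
The plan is to directly count the arithmetic operations in the dot-product step $S = q_i K_{:i+1}^T$ as described in the ``Compute Attention Scores'' step of the auto-regressive attention procedure. First I would recall the shapes: $q_i \in \mathbb{R}^{1 \times d_{head}}$ and $K_{:i+1} \in \mathbb{R}^{(i+1) \times d_{head}}$, so $K_{:i+1}^T \in \mathbb{R}^{d_{head} \times (i+1)}$ and the product $S = q_i K_{:i+1}^T$ lies in $\mathbb{R}^{1 \times (i+1)}$. Each of the $i+1$ output entries is an inner product of two vectors in $\mathbb{R}^{d_{head}}$, which costs $d_{head}$ multiplications and $d_{head}-1$ additions, i.e.\ $\Theta(d_{head})$ floating-point operations. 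Summing over the $i+1$ entries gives a total of $\Theta((i+1)\cdot d_{head})$ operations, hence $C_{std} = O((i+1)\cdot d_{head})$.

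The key steps, in order, are: (1) fix the dimensions of the query vector and the key cache at decoding step $i$; (2) observe that computing $S$ is a single $(1 \times d_{head})$-by-$(d_{head} \times (i+1))$ matrix multiplication; (3) bound the cost of each of the $i+1$ entrywise inner products by $O(d_{head})$; (4) sum over the $i+1$ entries and conclude. One should also note that forming $K_{:i+1}^T$ is a no-op (or at most a relabeling of indices) and does not add to the asymptotic cost, and that the cache update $K_{:i+1} = \text{concat}(K_{:i}, k_i)$ is a $O(d_{head})$ operation which is dominated by the score computation for $i \geq 1$.

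I do not expect any genuine obstacle here: the statement is essentially a restatement of the standard cost of a matrix-vector product, and the only thing to be careful about is matching the claimed bound $O((i+1)\cdot d_{head})$ to the operation count rather than, say, conflating it with the full quadratic-in-sequence-length cost of the whole attention layer (which is $O(n^2 d_{head})$ when summed over all $n$ tokens). The ``hard part,'' such as it is, is merely to state precisely that we are measuring the per-token score computation at step $i+1$ in the serial auto-regressive regime, and that we treat $d_{head}$ as a variable rather than a constant so the bound is meaningful; with that scoping fixed, the counting argument above closes the proof immediately.
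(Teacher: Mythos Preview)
Your proposal is correct and follows essentially the same approach as the paper: both fix the shapes of $q_i$ and $K_{:i+1}^T$, count the $(i+1)\cdot d_{head}$ multiplication-addition operations in the matrix-vector product, and conclude the $O((i+1)\cdot d_{head})$ bound. Your version is slightly more detailed (noting the transpose is free and the cache update is dominated), but the core argument is identical.
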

\begin{proof}
The cost is dominated by the matrix-vector product $q_i K_{:i+1}^T$, where the query $q_i \in \mathbb{R}^{1 \times d_{head}}$ and the transposed key cache $K_{:i+1}^T \in \mathbb{R}^{d_{head} \times (i+1)}$. This operation requires $(i+1) \cdot d_{head}$ multiplication-addition pairs, leading to a complexity of $O((i+1) \cdot d_{head})$ \citep{Goodfellow-et-al-2016-DL}.
\end{proof}

\begin{theorem}[Complexity of AQUA]
\label{thm:aqua_complexity}
The computational complexity of the unnormalized score calculation in the AQUA algorithm (Algorithm~\ref{alg:query_mag_detailed}) for a single head at token $i+1$ is $C_{AQUA} = O(d_{head}^2 + (i+1) \cdot k)$.
\end{theorem}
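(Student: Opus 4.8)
The plan is to prove the bound by a direct, line-by-line accounting of the operations in Algorithm~\ref{alg:query_mag_detailed}, separating the costs into a fixed (sequence-length-independent) projection overhead and a variable cost that scales with the current cache size. First I would handle the two projection steps, $\hat{q}_i \gets q_i P$ and $\hat{k}_i \gets k_i P$: each is a product of a $1 \times d_{head}$ vector with the $d_{head} \times d_{head}$ matrix $P$, requiring $O(d_{head}^2)$ multiply--add operations, for a combined $O(d_{head}^2)$. The key point to make explicit here is that only the incoming key $k_i$ is projected at this step --- the projected cache $\hat{K}_{:i}$ is maintained incrementally via the concatenation on line~3 (an append of a single $d_{head}$-dimensional row, costing $O(d_{head})$) --- so there is no hidden $O((i+1)\, d_{head}^2)$ term from reprojecting previously cached keys.

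Next I would bound the selection machinery. Computing $v_{\text{mag}} = |\hat{q}_i|$ is $O(d_{head})$ elementwise work, and extracting the indices of the $k$ largest entries via $\arg\text{TopK}$ can be done in $O(d_{head})$ time with a linear-time selection procedure (or $O(d_{head}\log k)$ with a bounded-size heap); in either case this is dominated by $O(d_{head}^2)$. The two gather/slice operations then cost $O(k)$ for $\tilde{q}_i$ and $O((i+1)\cdot k)$ for assembling the $k$ selected columns of the $(i+1)$-row projected key cache $\tilde{K}_{:i+1}$.

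Finally, the score computation $\tilde{S} \gets \tilde{q}_i \tilde{K}_{:i+1}^\top$ multiplies a $1 \times k$ vector by a $k \times (i+1)$ matrix, requiring $(i+1)\cdot k$ multiply--add operations, i.e.\ $O((i+1)\cdot k)$. Summing all contributions --- $O(d_{head}^2)$ from the two projections, lower-order $O(d_{head})$ and $O(d_{head}\log d_{head})$ terms from the magnitude and top-$k$ steps (all absorbed into $O(d_{head}^2)$), and $O((i+1)\cdot k)$ from the column gather and the final dot product --- gives $C_{AQUA} = O(d_{head}^2 + (i+1)\cdot k)$, as claimed. Note the accounting stops at $\tilde{S}$, consistent with the theorem being about the unnormalized score calculation only.

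I do not anticipate a genuine obstacle: the statement is an arithmetic-operation count rather than a structural argument. The only points requiring a small amount of care are (i) being explicit that the projected key cache is updated incrementally, so that the per-step projection cost is the fixed $O(d_{head}^2)$ rather than something growing with the sequence length, and (ii) fixing the cost model for $\arg\text{TopK}$ precisely enough that it is transparently subsumed by the $O(d_{head}^2)$ term. Once both are stated, the bound follows by simple addition of the per-line costs.
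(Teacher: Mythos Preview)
Your proposal is correct and follows essentially the same line-by-line cost accounting as the paper's own proof: both decompose the work into the $O(d_{head}^2)$ projection overhead, an $O(d_{head})$ magnitude/top-$k$ step, and $O((i+1)\cdot k)$ for the slicing and final dot product, then sum. Your treatment is in fact slightly more careful in explicitly noting that the projected key cache is maintained incrementally (so no hidden $O((i+1)\,d_{head}^2)$ term arises) and in stating the cost model for $\arg\text{TopK}$, but the overall structure matches the paper exactly.
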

\begin{proof}
The total complexity is the sum of the complexities of the constituent steps of the algorithm:
\begin{enumerate}
    \item \textbf{Projections:} Projecting the current query $q_i$ and key $k_i$ with matrix $P \in \mathbb{R}^{d_{head} \times d_{head}}$ requires two matrix-vector multiplications, for a total cost of $O(d_{head}^2)$ \citep{Goodfellow-et-al-2016-DL}.
    \item \textbf{Magnitude Calculation \& Top-k Selection:} Computing the element-wise absolute value of $\hat{q}_i$ is an $O(d_{head})$ operation. Finding the indices of the top $k$ elements can be done efficiently using a selection algorithm (e.g., Introselect) in $O(d_{head})$ average-case time \citep{BLUM1973448}.
    \item \textbf{Dimension Slicing:} This memory operation to create the sliced key matrix $\tilde{K}_{:i+1}$ requires accessing $(i+1) \cdot k$ elements, with a computational cost bounded by $O((i+1) \cdot k)$.
    \item \textbf{Approximate Attention Computation:} The final matrix-vector product is between $\tilde{q}_i \in \mathbb{R}^{1 \times k}$ and $\tilde{K}_{:i+1}^T \in \mathbb{R}^{k \times (i+1)}$. The complexity of this step is $O((i+1) \cdot k)$.
\end{enumerate}
Summing these complexities and retaining the dominant terms, we get:
\begin{align*}
C_{AQUA} &= O(d_{\text{head}}^2) + O(d_{\text{head}}) + O((i+1) \cdot k) \\
       &= O(d_{head}^2 + (i+1) \cdot k)
\end{align*}
This concludes the proof.
\end{proof}

\begin{corollary}[Computational Break-Even Point]
\label{cor:break_even}
The AQUA algorithm is computationally more efficient than standard attention for computing unnormalized scores when the sequence length $i+1$ satisfies the condition: $i+1 > \frac{d_{head}^2}{d_{head} - k}$.
\end{corollary}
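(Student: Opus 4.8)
The plan is to treat this as a direct algebraic consequence of Proposition~\ref{prop:std_attention} and Theorem~\ref{thm:aqua_complexity}, being a little careful that the comparison is meaningful at the level of leading-order operation counts rather than merely asymptotic symbols. First I would recall the explicit flop counts underlying the two stated complexities: computing $q_i K_{:i+1}^\top$ costs $(i+1)\cdot d_{head}$ multiply--add pairs, while the AQUA score computation costs $c\,d_{head}^2$ for the two projections (with $c$ a small constant, $c=2$ if we count $q_iP$ and $k_iP$) plus $(i+1)\cdot k$ for the truncated dot product, together with $O(d_{head})$ lower-order terms for magnitude evaluation, top-$k$ selection, and slicing. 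I would make explicit that ``more efficient'' means the dominant cost of AQUA is strictly below that of standard attention once the sequence is long enough that the $O(d_{head})$ terms are negligible relative to the $(i+1)$-scaling terms.

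The core step is then to impose $d_{head}^2 + (i+1)\cdot k < (i+1)\cdot d_{head}$, i.e.\ to require the AQUA cost to undercut the standard cost. Rearranging, I would subtract $(i+1)\cdot k$ from both sides to obtain $d_{head}^2 < (i+1)\,(d_{head} - k)$. Here I would invoke the hypothesis $k < d_{head}$ (the ``meaningful approximation'' regime), which guarantees $d_{head} - k > 0$, so dividing both sides by $d_{head} - k$ preserves the direction of the inequality and yields exactly
$$ i+1 > \frac{d_{head}^2}{d_{head} - k}. $$
Conversely, reading the chain of equivalences backwards shows that any $i+1$ exceeding this threshold makes the AQUA dominant cost strictly smaller, so the condition is both necessary and sufficient at the level of the leading terms.

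I expect the only real subtlety — and hence the ``hard part,'' though it is mild — to be the bookkeeping around the constants suppressed inside the $O(\cdot)$ notation: the projection overhead carries a constant factor ($c d_{head}^2$) and the lower-order $O(d_{head})$ terms must be dominated. I would address this by phrasing the corollary's conclusion in terms of the asymptotic per-token cost, noting that the projection is a one-time additive overhead amortized over the sequence and that for $i+1$ large the $O(d_{head})$ contributions are lower order; with these conventions the inequality above is exact up to the constant on $d_{head}^2$, and one may absorb that constant into the stated break-even expression or carry it explicitly as $i+1 > c\,d_{head}^2/(d_{head}-k)$. Finally I would remark, as the paper's surrounding text does, that beyond the break-even point the net saving is $(i+1)(d_{head}-k) - c\,d_{head}^2$, which is strictly increasing in $i$, so the advantage compounds with every additional decoded token.
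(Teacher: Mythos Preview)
Your proposal is correct and follows essentially the same algebraic route as the paper: set up $d_{head}^2 + (i+1)k < (i+1)d_{head}$, rearrange to $d_{head}^2 < (i+1)(d_{head}-k)$, and divide through using $k < d_{head}$. The paper's own proof is terser and simply works at the level of the Big-$O$ expressions without your additional discussion of the suppressed constants and lower-order terms, but the core argument is identical.
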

\begin{proof}
We find the condition for which $C_{AQUA} < C_{std}$:
\begin{align*}
d_{head}^2 + (i+1) \cdot k &< (i+1) \cdot d_{head} \\
d_{head}^2 &< (i+1) \cdot d_{head} - (i+1) \cdot k \\
d_{head}^2 &< (i+1)(d_{head} - k)
\end{align*}
Assuming $k < d_{head}$ (the practical use case for approximation), we can divide by the positive term $(d_{head} - k)$:
$$ \frac{d_{head}^2}{d_{head} - k} < i+1 $$
This proves the corollary.
\end{proof}

Let us clarify this relationship between the hyperparameter $k$ and current sequence length $i$ bydissecting the trade-off between fixed overhead and accumulated savings.

\subsubsection*{\textbf{Decomposing Computational Costs}}
To understand the break-even point, we must separate the costs into two components:
\begin{enumerate}
    \item \textbf{Fixed Overhead Cost ($C_{fixed}$):} This is a one-time cost incurred at each step, independent of the sequence length $i$. It is dominated by the projection of the current query and key vectors, with complexity $O(d_{head}^2)$. This cost is paid regardless of whether the context is short or long.

    \item \textbf{Variable Savings ($S_{var}$):} This represents the computational savings achieved for each of the $i+1$ tokens in the context. The saving for each token is proportional to the number of dimensions we prune, $(d_{head} - k)$. Thus, the total accumulated savings across the entire context is proportional to $(i+1)(d_{head} - k)$.
\end{enumerate}

\subsubsection*{\textbf{The Break-Even Condition and the Role of $k$}}
The algorithm becomes computationally superior precisely when the total accumulated savings surpass the fixed overhead cost:
$$ (i+1)(d_{head} - k) > d_{head}^2 $$
This inequality shows that when $k$ is small (an aggressive approximation), the per-token saving $(d_{head} - k)$ is large, meaning a shorter sequence is needed to ``pay off'' the fixed overhead. Conversely, when $k$ is large, the per-token saving is small, and a much longer sequence is required to accumulate enough savings to overcome the same fixed cost.

\subsubsection*{A Numerical Example}
Let us consider a typical head dimension, $d_{head} = 128$, making the fixed overhead proportional to $d_{head}^2 = 16,384$.
\begin{itemize}
    \item \textbf{Case 1: Aggressive Approximation ($k=16$)}
    The per-token saving is proportional to $128 - 16 = 112$. The break-even point is $i+1 > \frac{16384}{112} \approx 147$ tokens.

    \item \textbf{Case 2: Moderate Approximation ($k=64$)}
    The per-token saving is proportional to $128 - 64 = 64$. The break-even point is $i+1 > \frac{16384}{64} = 256$ tokens.

    \item \textbf{Case 3: Fine Approximation ($k=112$)}
    The per-token saving is proportional to $128 - 112 = 16$. The break-even point is $i+1 > \frac{16384}{16} = 1024$ tokens.
    
    \item \textbf{Case 4: No Approximation ($k=d_{head}=128$)}
    In this edge case, the per-token saving is $128 - 128 = 0$. The break-even condition becomes $i+1 > \frac{16384}{0}$, which approaches infinity. This confirms the intuition: if no dimensions are pruned, there are no computational savings. Because an additional fixed overhead is incurred at every decoding step for the projections, the AQUA method will always be less efficient than standard attention in this scenario.
\end{itemize}
This example clearly illustrates the principle: a more aggressive approximation (smaller $k$) yields a greater per-token saving, thus requiring a shorter sequence to realize a net computational gain.

\begin{figure*}[t]
\centering
\includegraphics[width=1.0\textwidth]{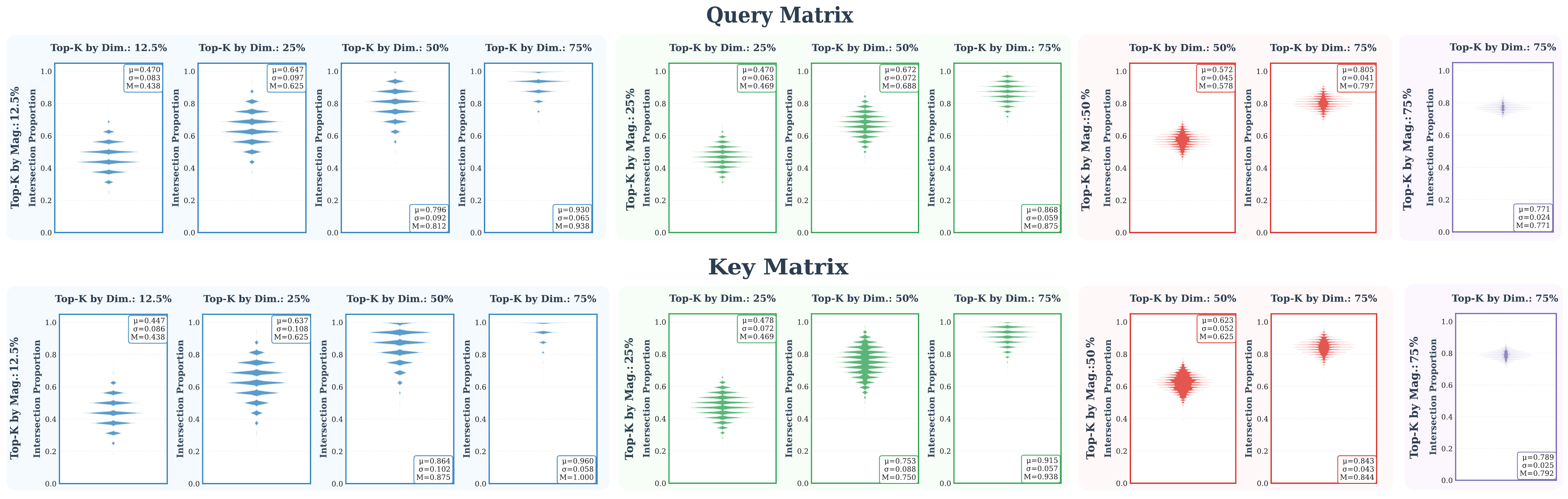}
\caption{Overlap analysis for the Query and Key matrices (Layer 31, Head 31). The plots show the intersection proportion between the set of top dimensions selected by magnitude and the set of top dimensions selected by PCA index. The low overlap, especially for smaller K, demonstrates that global variance (PCA) and token-specific activity (magnitude) are not equivalent notions of importance.}
\label{fig:key_query_overlap}
\end{figure*}

\subsection{Detailed Cross-Lingual Generalizability Analysis}
\label{app:detailed_cross_linguality}

This appendix provides a more detailed account of the cross-lingual experiment designed to test the robustness of our offline projection matrix.

\subsubsection*{Experimental Design and Rationale}
To create a rigorous test case, we sought a language that was maximally distant from the English used in our calibration dataset (\texttt{BookCorpus}). We selected Hindi for two primary reasons. First, it is an officially supported language in the \texttt{meta-llama/Llama-3.1-8B-Instruct} model, ensuring that the tokenizer and internal representations are well-defined. Second, and more importantly, Hindi uses the Devanagari script, which is structurally and visually unrelated to the Latin script used for English. This provides an extreme test: if the projection matrix had overfit to linguistic or orthographic features of English, we would expect its performance to degrade significantly when applied to Hindi activations.

We ensured experimental consistency by sampling an equal number of query and key vectors from both the English evaluation set (\texttt{WikiText} \citep{merity2016pointer}) and the Hindi dataset (\texttt{wikipedia-hi}).

\subsubsection*{Results and Discussion}
The full results for the entire GQA group are presented in Figure \ref{fig:language_comparison_full}. The plot shows the mean information retention loss for the shared Key matrix and all four associated Query matrices ($Q_0$ through $Q_3$). These are the matrices corresponding to a single projection matrix. Across all matrix types, the performance on the Hindi data closely mirrors the performance on the English data. This consistency strongly supports our hypothesis that the SVD-based projection method captures fundamental, language-agnostic structural properties of the attention heads, rather than superficial linguistic patterns. This makes our approach highly generalizable and robust for multilingual applications.

\subsection{Detailed Analysis of Magnitude vs. PCA-based Selection}
\label{app:magnitude_vs_pca}

This appendix provides the detailed empirical analysis that justifies our use of magnitude-based dimension selection over a naive slicing of principal components.

\subsubsection*{\textbf{Methodology}}
Our study investigates the overlap between two distinct sets of ``important'' dimensions within the Key and Query matrices of a specific attention head (Layer 31, Head 31) from the \texttt{meta-llama/Llama-3.1-8B-Instruct} \citep{grattafiori2024llama3herdmodels} model. The analysis was conducted on text sequences from the WikiText-2-raw-v1 dataset. Let $v \in \mathbb{R}^{d_{head}}$ be a vector from either a Query or Key matrix. We define two sets of indices from the full set of dimensions $\mathcal{I} = \{1, \dots, d_{head}\}$:
\begin{enumerate}
    \item \textbf{The Set of Top-$K$ Magnitude Dimensions}, $\mathcal{S}_{mag}(v, K)$, is the set of indices corresponding to the $K$ largest absolute values of the components of the unprojected vector $v$. Formally:
    $$ \mathcal{S}_{mag}(v, K) = \arg \text{TopK}_{j \in \mathcal{I}} (|v_j|), \quad \text{where } |\mathcal{S}_{mag}(v, K)| = K $$
    \item \textbf{The Set of Top-$K'$ PCA Dimensions}, $\mathcal{S}_{pca}(K')$, is the set of indices corresponding to the first $K'$ principal components derived from our offline calibration. As these components are ordered by variance, this is simply the set of the first $K'$ indices:
    $$ \mathcal{S}_{pca}(K') = \{1, 2, \dots, K'\} $$
\end{enumerate}
We consider values for $K$ and $K'$ from the set $\{0.125 \cdot d_{head}, 0.25 \cdot d_{head}, 0.5 \cdot d_{head}, 0.75 \cdot d_{head}\}$. To quantify the alignment between token-specific importance (magnitude) and global importance (PCA), we calculate the intersection proportion, $\rho$, for each vector $v$:
$$ \rho(v, K, K') = \frac{|\mathcal{S}_{mag}(v, K) \cap \mathcal{S}_{pca}(K')|}{K} $$
This metric measures the fraction of the top magnitude dimensions that are also captured by the top $K'$ principal components. The distributions of $\rho$ across the dataset are then visualized using violin plots.

\subsubsection*{\textbf{Results and Discussion}}
The results are presented in Figure \ref{fig:key_query_overlap}. The key observations are as follows:
\begin{itemize}
    \item \textbf{Discrepancy Between Magnitude and PCA Importance:} The central finding is that the overlap is often far from perfect. For example, when selecting the top 12.5\% of dimensions by magnitude, only a fraction of these are captured by the top 12.5\% of principal components. This indicates a significant discrepancy between the dimensions that are most ``active'' for a specific token (high magnitude) and those that capture the most variance globally (top principal components).
    \item \textbf{Increasing Overlap with More PCA Components:} As expected, the overlap increases as we include more principal components (moving horizontally across the columns in the figures). However, even when considering 75\% of the PCA dimensions, the overlap with the top 12.5\% of magnitude dimensions is not total.
\end{itemize}

This empirical analysis demonstrates that simply slicing the top $k$ dimensions by index after projection is a suboptimal strategy. While PCA effectively identifies directions of maximal global variance, these directions do not consistently align with the dimensions that are most salient for a specific query, as indicated by their magnitude. This finding provides strong motivation for the central mechanism of the AQUA algorithm: dynamically selecting the dimensions with the highest magnitude for each query, rather than relying on a fixed, static set of principal component indices.

\subsection{Rotational Invariance of Attention Scores}
\label{app:rotational_invariance}

This appendix provides the formal proof for the claim that projecting query and key vectors with an orthogonal matrix is a lossless rotation that preserves the dot product scores.

\begin{lemma}[Rotational Invariance of Attention Scores]
Let $P \in \mathbb{R}^{d_{head} \times d_{head}}$ be an orthogonal projection matrix (i.e., $PP^T = P^T P = I$) derived from offline calibration. Let the projected query and key matrices be $\hat{q}_i = q_i P$ and $\hat{K}_{:i+1} = K_{:i+1} P$. The attention scores computed using the original vectors are identical to those computed using the projected vectors.
\end{lemma}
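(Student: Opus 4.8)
The plan is to show directly that the matrix of attention scores is unchanged under the projection, using only the orthogonality relation $P P^T = I$. The statement reduces to the identity $\hat{q}_i \hat{K}_{:i+1}^T = q_i K_{:i+1}^T$, so the whole proof is a one-line manipulation of matrix products once the setup is unwound.

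First I would substitute the definitions $\hat{q}_i = q_i P$ and $\hat{K}_{:i+1} = K_{:i+1} P$ into the projected score expression $\hat{S} = \hat{q}_i \hat{K}_{:i+1}^T$. Then I would use the transpose-of-a-product rule to write $\hat{K}_{:i+1}^T = (K_{:i+1} P)^T = P^T K_{:i+1}^T$, so that $\hat{S} = q_i P P^T K_{:i+1}^T$. Next I would invoke orthogonality, $P P^T = I$, which collapses the middle factor and yields $\hat{S} = q_i K_{:i+1}^T = S$. Since the unnormalized scores are identical, everything downstream (the scaling by $1/\sqrt{d_{head}}$, the softmax, and the value aggregation) is identical as well, so I would add a remark that the entire attention output is preserved. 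Optionally I would also note the special case of a single key vector, $\hat q_i \hat k_j^T = q_i P P^T k_j^T = q_i k_j^T$, to make the dot-product interpretation explicit.

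There is essentially no obstacle here — the only thing to be careful about is the convention that the vectors are row vectors (as fixed in the notation section, $q_i \in \mathbb{R}^{1 \times d_{head}}$), so that right-multiplication by $P$ is the correct form of the projection and the transpose lands $P^T$ in the interior of the product rather than on the outside. I would make sure the statement explicitly assumes $P$ is orthogonal (which holds because $P = V$ from an SVD, whose columns are orthonormal), since that is the single hypothesis doing all the work; without it the identity fails. The one genuine caveat worth flagging is that in practice $P$ is computed offline on a calibration set and stored in finite precision, so the rotation is only exactly lossless up to numerical error — but at the level of the exact mathematical claim this does not enter, and I would confine any such comment to a concluding remark rather than the proof body.
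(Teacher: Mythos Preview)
Your proposal is correct and matches the paper's proof essentially line for line: substitute the definitions, apply the transpose-of-a-product rule to get $P^T$ in the interior, invoke $PP^T = I$, and conclude $\hat{S} = S$. Your additional remarks about downstream invariance, the row-vector convention, and numerical precision go slightly beyond the paper's own proof but are all appropriate and do not alter the argument.
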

\begin{proof}
The original attention scores are given by the dot product $S = q_i K_{:i+1}^T$. The attention scores computed with the projected vectors are $\hat{S} = \hat{q}_i \hat{K}_{:i+1}^T$.

We can show that $\hat{S}$ is equivalent to $S$:
\begin{align*}
    \hat{S} &= (\hat{q}_i) (\hat{K}_{:i+1})^T \\
            &= (q_i P) (K_{:i+1} P)^T  && \text{[Substituting definitions]} \\
            &= (q_i P) (P^T K_{:i+1}^T) && \text{[Using the transpose property $(AB)^T = B^T A^T$]} \\
            &= q_i (P P^T) K_{:i+1}^T  && \text{[Associativity of matrix multiplication]} \\
            &= q_i I K_{:i+1}^T        && \text{[Since P is orthogonal, $PP^T = I$]} \\
            &= q_i K_{:i+1}^T = S
\end{align*}
Thus, the scores are identical ($\hat{S} = S$). This proves that the projection is a lossless rotation of the coordinate space that preserves the dot product relationships between all query and key vectors. The only source of approximation error in our method, therefore, comes from the subsequent step of selecting a subset of the dimensions.
\end{proof}

\subsection{Benchmark and Evaluation Details}
\label{app:benchmark_setup}

This appendix provides a detailed account of the models and benchmark suite used in our empirical evaluation.

\subsubsection*{Models}
\begin{itemize}
    \item \textbf{\texttt{meta-llama/Llama-3.1-8B-Instruct} \citep{grattafiori2024llama3herdmodels}:} A powerful, state-of-the-art 8-billion parameter model that serves as our primary testbed. It features a model dimension ($d_{model}$) of 4096 across 32 layers. The model is built on a Grouped-Query Attention (GQA) architecture, with 32 query heads and 8 key/value heads (a group size of 4), where each head has a dimension ($d_{head}$) of 128.

    \item \textbf{\texttt{OLMoE-1B-7B-Instruct} \citep{muennighoff2025olmoeopenmixtureofexpertslanguage}:} A Mixture-of-Experts (MoE) model whose inclusion allows us to test the generalizability of our method on a different architecture type. This model is built on a standard Multi-Head Attention (MHA) architecture, where every head has a unique Key and Query. It features a model dimension ($d_{model}$) of 4096, 64 experts, 27 layers, 16 heads, and a head dimension ($d_{head}$) of 128.
\end{itemize}

\begin{table*}[t!]
\centering
\caption{
    Full performance results for \texttt{Llama-3.1-8B-Instruct} and \texttt{OLMoE-1B-7B-Instruct} on various benchmarks under different levels of AQUA pruning ratio ($k_{ratio}$). The baseline for each model, denoted by `B' ($k_{ratio}=1.0$), is highlighted in bold. Further details are provided in the Appendix.
}
\label{tab:pruning_results_percent_full}
\resizebox{\textwidth}{!}{%
\begin{tabular}{@{}llccccccc@{}}
\toprule
\multirow{2}{*}{Model} & \multirow{2}{*}{\shortstack{$k_{ratio}$}} & MMLU & GSM8K & HellaSwag & WinoGrande & \shortstack{TruthfulQA\\MC2} & \shortstack{ARC\\Challenge} & WikiText \\
\cmidrule(l){3-9} 
& & (acc $\uparrow$) & (acc $\uparrow$) & (acc $\uparrow$) & (acc $\uparrow$) & (acc $\uparrow$) & (acc $\uparrow$) & (ppl $\downarrow$) \\ 
\midrule
\multirow{8}{*}{\shortstack{Llama-3.1-8B \\ Instruct}} 
 & \textbf{B} & $\bm{0.687 \pm 0.004}$ & $\bm{0.816 \pm 0.011}$ & $\bm{0.785 \pm 0.004}$ & $\bm{0.755 \pm 0.012}$ & $\bm{0.551 \pm 0.016}$ & $\bm{0.647 \pm 0.014}$ & $\bm{8.910}$ \\
 & 0.90  & $0.687 \pm 0.004$ & $0.792 \pm 0.011$ & $0.784 \pm 0.004$ & $0.756 \pm 0.012$ & $0.551 \pm 0.016$ & $0.647 \pm 0.014$ & $8.910$ \\
 & 0.75  & $0.685 \pm 0.004$ & $0.805 \pm 0.011$ & $0.785 \pm 0.004$ & $0.757 \pm 0.012$ & $0.551 \pm 0.016$ & $0.645 \pm 0.014$ & $8.930$ \\
 & 0.50  & $0.666 \pm 0.004$ & $0.720 \pm 0.012$ & $0.780 \pm 0.004$ & $0.738 \pm 0.012$ & $0.551 \pm 0.016$ & $0.620 \pm 0.014$ & $9.200$ \\
 & 0.40  & $0.634 \pm 0.004$ & $0.541 \pm 0.014$ & $0.773 \pm 0.004$ & $0.696 \pm 0.013$ & $0.540 \pm 0.016$ & $0.600 \pm 0.014$ & $9.810$ \\
 & 0.30  & $0.507 \pm 0.004$ & $0.146 \pm 0.010$ & $0.732 \pm 0.004$ & $0.598 \pm 0.014$ & $0.502 \pm 0.016$ & $0.530 \pm 0.015$ & $12.550$ \\
 & 0.20  & $0.242 \pm 0.004$ & $0.019 \pm 0.004$ & $0.391 \pm 0.005$ & $0.511 \pm 0.014$ & $0.471 \pm 0.015$ & $0.236 \pm 0.012$ & $44.960$ \\
 & 0.10  & $0.230 \pm 0.004$ & $0.012 \pm 0.003$ & $0.261 \pm 0.004$ & $0.496 \pm 0.014$ & $0.491 \pm 0.016$ & $0.236 \pm 0.012$ & $970.440$ \\
\midrule
\multirow{8}{*}{\shortstack{OLMoE-1B-7B \\ Instruct}} 
 & \textbf{B} & $\bm{0.530 \pm 0.004}$ & $\bm{0.451 \pm 0.014}$ & $\bm{0.783 \pm 0.004}$ & $\bm{0.673 \pm 0.013}$ & $\bm{0.491 \pm 0.016}$ & $\bm{0.532 \pm 0.015}$ & $\bm{11.340}$ \\
 & 0.90 & $0.530 \pm 0.004$ & $0.463 \pm 0.014$ & $0.782 \pm 0.004$ & $0.668 \pm 0.013$ & $0.489 \pm 0.016$ & $0.538 \pm 0.015$ & $11.340$ \\
 & 0.75 & $0.529 \pm 0.004$ & $0.453 \pm 0.014$ & $0.783 \pm 0.004$ & $0.669 \pm 0.013$ & $0.485 \pm 0.016$ & $0.542 \pm 0.015$ & $11.330$ \\
 & 0.50 & $0.526 \pm 0.004$ & $0.426 \pm 0.014$ & $0.778 \pm 0.004$ & $0.658 \pm 0.013$ & $0.488 \pm 0.016$ & $0.540 \pm 0.015$ & $11.340$ \\
 & 0.40 & $0.512 \pm 0.004$ & $0.385 \pm 0.013$ & $0.771 \pm 0.004$ & $0.640 \pm 0.013$ & $0.488 \pm 0.016$ & $0.532 \pm 0.015$ & $11.470$ \\
 & 0.30 & $0.485 \pm 0.004$ & $0.253 \pm 0.012$ & $0.747 \pm 0.004$ & $0.615 \pm 0.014$ & $0.481 \pm 0.016$ & $0.513 \pm 0.015$ & $12.140$ \\
 & 0.20 & $0.403 \pm 0.004$ & $0.042 \pm 0.006$ & $0.665 \pm 0.005$ & $0.549 \pm 0.014$ & $0.485 \pm 0.016$ & $0.378 \pm 0.014$ & $15.960$ \\
 & 0.10 & $0.243 \pm 0.004$ & $0.014 \pm 0.003$ & $0.346 \pm 0.005$ & $0.511 \pm 0.014$ & $0.486 \pm 0.016$ & $0.239 \pm 0.012$ & $74.440$ \\
\bottomrule
\end{tabular}%
}
\end{table*}

\subsubsection*{Benchmark Suite Rationale}
All evaluations were conducted using the standardized EleutherAI \texttt{lm-evaluation-harness} \citep{eval-harness} framework. The chosen benchmarks and few-shot settings align with common practices in LLM evaluation to ensure comparability and reproducibility.

\begin{itemize}[leftmargin=*, noitemsep]
    \item \textbf{MMLU \citep{wang2024mmluprorobustchallengingmultitask} (5-shot):} This benchmark evaluates massive multitask language understanding across 57 subjects. The 5-shot setting is a standard and challenging configuration widely used for reporting performance on top-tier LLMs and public leaderboards \citep{Mai2024MMLU}.

    \item \textbf{GSM8K \citep{cobbe2021trainingverifierssolvemath} (8-shot):} This benchmark tests grade-school mathematical reasoning. We use an 8-shot Chain-of-Thought (CoT) prompting strategy, as it is the standard method for eliciting multi-step reasoning from capable models \citep{10.5555/3600270.3602070}.
    
    \item \textbf{HellaSwag \citep{zellers2019hellaswag} (10-shot):} This benchmark evaluates commonsense inference about everyday events. The 10-shot setting is commonly reported in recent literature for state-of-the-art models.
    
    \item \textbf{WinoGrande \citep{sakaguchi2019winograndeadversarialwinogradschema} (5-shot):} This benchmark targets commonsense reasoning through pronoun resolution problems. The 5-shot setting is the standard for evaluation on this task.
    
    \item \textbf{ARC Challenge \citep{clark2018thinksolvedquestionanswering} (25-shot):} The AI2 Reasoning Challenge (ARC) contains difficult science questions. The 25-shot setting is standard for recent, high-performance model evaluations.
    
    \item \textbf{TruthfulQA \citep{figueras2025truthknowslanguageevaluating} (6-shot, MC2 variant):} This benchmark measures a model's truthfulness and its ability to avoid generating common falsehoods. We use the multiple-choice (MC2) variant with a 6-shot setup, which is a standard configuration for this task.
    
    \item \textbf{WikiText-103 \citep{merity2016pointer} (0-shot):} This dataset is a standard for evaluating a model's fundamental language modeling capability. It is measured in perplexity (ppl), and the standard evaluation protocol is zero-shot, as few-shot prompting is not applicable to perplexity calculation.
\end{itemize}

\subsection{Detailed Standalone AQUA Performance Results}
\label{app:full_standalone_results}

This appendix provides the complete results and a detailed analysis for the standalone AQUA evaluation presented in the main paper. Table \ref{tab:pruning_results_percent_full} shows the performance of both models across the full spectrum of pruning ratios.

\subsubsection*{Detailed Analysis}
For \texttt{Llama-3.1-8B-Instruct}, we can observe that reducing the retention ratio to 0.90 has almost no effect. At $k_{ratio}=0.75$, the performance remains exceptionally strong, with only a 0.02 point increase in perplexity and statistically insignificant changes in accuracy on most tasks. This confirms that a 25\% dimensionality reduction is nearly ``free'' in terms of performance. The first significant drop occurs at $k_{ratio}=0.50$, where the model's mathematical reasoning ability (GSM8K) begins to suffer, although its performance on commonsense tasks like HellaSwag remains robust. This suggests that complex, multi-step reasoning is more sensitive to dimensionality reduction than commonsense inference. The performance degradation accelerates significantly below a ratio of 0.40, with a near-total collapse of reasoning capabilities at 0.20 and below.

The \texttt{OLMoE-1B-7B-Instruct} model shows a similar overall trend but with a more graceful degradation curve. Even at a $k_{ratio}$ of 0.50, the performance drop is minimal across all benchmarks. The degradation becomes more noticeable at 0.40 and 0.30, but it is less severe than what is observed with Llama-3.1 at the same ratios. As discussed in the main text, we attribute this increased resilience to its MHA architecture, where each query has a dedicated key. This allows for greater sparsity in the key vectors compared to the information-dense shared keys in GQA, making them less sensitive to pruning.

\begin{table*}[t!]
\centering
\caption{
    Performance of \texttt{OLMoE-1B-7B-Instruct} \citep{muennighoff2025olmoeopenmixtureofexpertslanguage} using the synergistic \textbf{AQUA-H2O} attention mechanism. The baseline H2O performance ($H2O_{ratio}=1.00$) is denoted by `B'.
}
\label{tab:qma-h2o_olmoe_full}
\resizebox{\textwidth}{!}{%
\begin{tabular}{@{}ccccccccc@{}}
\toprule
\multicolumn{2}{c}{Hyperparameters} & \multicolumn{7}{c}{Benchmark Performance} \\
\cmidrule(r){1-2} \cmidrule(l){3-9}
$H2O_{ratio}$ & $k_{ratio}$ & MMLU & GSM8K & HellaSwag & WinoGrande & \shortstack{TruthfulQA\\MC2} & \shortstack{ARC\\Challenge} & WikiText \\
 & & (acc $\uparrow$) & (acc $\uparrow$) & (acc $\uparrow$) & (acc $\uparrow$) & (acc $\uparrow$) & (acc $\uparrow$) & (ppl $\downarrow$) \\
\midrule
\multirow{4}{*}{0.25} 
 & 0.30 & $0.483 \pm 0.004$ & $0.246 \pm 0.012$ & $0.747 \pm 0.004$ & $0.624 \pm 0.014$ & $0.461 \pm 0.016$ & $0.508 \pm 0.015$ & $12.180$ \\
 & 0.50 & $0.521 \pm 0.004$ & $0.425 \pm 0.014$ & $0.780 \pm 0.004$ & $0.657 \pm 0.013$ & $0.474 \pm 0.016$ & $0.535 \pm 0.015$ & $11.360$ \\
 & 0.75 & $0.527 \pm 0.004$ & $0.431 \pm 0.014$ & $0.786 \pm 0.004$ & $0.675 \pm 0.013$ & $0.477 \pm 0.016$ & $0.544 \pm 0.015$ & $11.360$ \\
 & 1.00 & $0.526 \pm 0.004$ & $0.421 \pm 0.014$ & $0.786 \pm 0.004$ & $0.661 \pm 0.013$ & $0.474 \pm 0.016$ & $0.539 \pm 0.015$ & $11.370$ \\
\midrule
\multirow{4}{*}{0.50} 
 & 0.30 & $0.485 \pm 0.004$ & $0.270 \pm 0.012$ & $0.747 \pm 0.004$ & $0.620 \pm 0.014$ & $0.484 \pm 0.016$ & $0.513 \pm 0.015$ & $12.120$ \\
 & 0.50 & $0.524 \pm 0.004$ & $0.423 \pm 0.014$ & $0.779 \pm 0.004$ & $0.670 \pm 0.013$ & $0.489 \pm 0.016$ & $0.532 \pm 0.015$ & $11.330$ \\
 & 0.75 & $0.529 \pm 0.004$ & $0.445 \pm 0.014$ & $0.784 \pm 0.004$ & $0.665 \pm 0.013$ & $0.487 \pm 0.016$ & $0.535 \pm 0.015$ & $11.330$ \\
 & 1.00 & $0.529 \pm 0.004$ & $0.453 \pm 0.014$ & $0.784 \pm 0.004$ & $0.659 \pm 0.013$ & $0.489 \pm 0.016$ & $0.535 \pm 0.015$ & $11.340$ \\
\midrule
\multirow{2}{*}{0.75} 
 & 0.30 & $0.485 \pm 0.004$ & $0.283 \pm 0.012$ & $0.747 \pm 0.004$ & $0.626 \pm 0.014$ & $0.483 \pm 0.016$ & $0.515 \pm 0.015$ & $12.140$ \\
 & 0.50 & $0.524 \pm 0.004$ & $0.416 \pm 0.014$ & $0.777 \pm 0.004$ & $0.653 \pm 0.013$ & $0.488 \pm 0.016$ & $0.536 \pm 0.015$ & $11.340$ \\
\midrule
\multirow{4}{*}{1.00 (B)} 
 & 0.30 & $0.485 \pm 0.004$ & $0.253 \pm 0.012$ & $0.747 \pm 0.004$ & $0.615 \pm 0.014$ & $0.481 \pm 0.016$ & $0.513 \pm 0.015$ & $12.140$ \\
 & 0.50 & $0.526 \pm 0.004$ & $0.426 \pm 0.014$ & $0.778 \pm 0.004$ & $0.658 \pm 0.013$ & $0.488 \pm 0.016$ & $0.540 \pm 0.015$ & $11.340$ \\
 & 0.75 & $0.529 \pm 0.004$ & $0.453 \pm 0.014$ & $0.783 \pm 0.004$ & $0.669 \pm 0.013$ & $0.485 \pm 0.016$ & $0.542 \pm 0.015$ & $11.330$ \\
 & 1.00 & $0.530 \pm 0.004$ & $0.451 \pm 0.014$ & $0.783 \pm 0.004$ & $0.673 \pm 0.013$ & $0.491 \pm 0.016$ & $0.532 \pm 0.015$ & $11.340$ \\
\bottomrule
\end{tabular}%
}
\end{table*}

\subsection{Detailed AQUA-H2O Results for OLMoE}
\label{app:qma_h2o_olmoe}

This appendix provides the full experimental results for the synergistic AQUA-H2O method applied to the \texttt{OLMoE-1B-7B-Instruct} model. The results, presented in Table \ref{tab:qma-h2o_olmoe_full}, confirm that the performance benefits of combining AQUA with a token eviction strategy generalize to models with standard Multi-Head Attention architectures. As with the Llama-3.1 model, we observe that combining a moderate level of token eviction with a gentle AQUA pruning ratio maintains performance very close to the baseline, demonstrating the versatility of our approach.

\subsection{Supplementary AQUA-Memory Benchmark Results}
\label{app:full_aqua_memory_results}

This appendix contains the supplementary benchmark results for the AQUA-Memory experiment, corresponding to the results presented in Table~\ref{tab:qma-memory-results-main} in the main paper. Table~\ref{tab:qma-memory-results-supplementary} provides the detailed performance metrics for the remaining evaluated tasks.

\begin{table*}[h!]
\centering
\caption{
    Supplementary performance of \texttt{Llama-3.1-8B-Instruct} \citep{grattafiori2024llama3herdmodels} with the \textbf{AQUA-Memory} attention mechanism. Benchmarks are abbreviated: WinoGrande (WG), TruthfulQA MC2 (TQA), and ARC Challenge (ARC). Baseline is in \textbf{bold}.
}
\label{tab:qma-memory-results-supplementary}
\small 
\begin{tabular}{@{}lcccccc@{}}
\toprule
& \multicolumn{3}{c}{Hyperparameters} & \multicolumn{3}{c}{Supplementary Benchmarks} \\
\cmidrule(lr){2-4} \cmidrule(l){5-7}
Attn. Type & $s_{ratio}$ & $k_{ratio}$ & $E_{ratio}$ & WG & TQA & ARC \\
& & & & (acc $\uparrow$) & (acc $\uparrow$) & (acc $\uparrow$) \\ 
\midrule
Full Attn. & --- & --- & 1.000 & $\mathbf{0.755 \pm 0.012}$ & $\mathbf{0.551 \pm 0.016}$ & $\mathbf{0.647 \pm 0.014}$ \\
\midrule
\multirow{6}{*}{\shortstack{AQUA+\\Memory}} 
 & \multirow{3}{*}{0.10} & 0.75 & 0.675 & $0.741 \pm 0.012$ & $0.541 \pm 0.016$ & $0.637 \pm 0.014$ \\
 & & 0.90 & 0.810 & $0.742 \pm 0.012$ & $0.542 \pm 0.016$ & $0.641 \pm 0.014$ \\
 & & 1.00 & 0.900 & $0.747 \pm 0.012$ & $0.542 \pm 0.016$ & $0.642 \pm 0.014$ \\
\cmidrule(l){2-7}
 & \multirow{3}{*}{0.25} & 0.75 & 0.563 & $0.665 \pm 0.013$ & $0.511 \pm 0.016$ & $0.575 \pm 0.014$ \\
 & & 0.90 & 0.675 & $0.669 \pm 0.013$ & $0.511 \pm 0.016$ & $0.574 \pm 0.014$ \\
 & & 1.00 & 0.750 & $0.669 \pm 0.013$ & $0.511 \pm 0.016$ & $0.575 \pm 0.014$ \\
\bottomrule
\end{tabular}
\end{table*}

\subsection*{Qualitative Analysis of Generative Coherence under AQUA Pruning}

To complement our quantitative benchmark results, this section provides a qualitative analysis of how generative coherence is affected by varying levels of AQUA based pruning. We conducted a simple generative task to observe the model's behavior as we decrease the $k_{ratio}$ hyperparameter, which controls the percentage of dimensions retained during the attention computation.

\begin{table*}[h!]
\centering
\caption{Generated summaries from \texttt{Llama-3.1-8B-Instruct} \citep{grattafiori2024llama3herdmodels} with varying levels of AQUA pruning.}
\label{tab:qualitative_analysis}
\begin{tabularx}{\textwidth}{@{} c X @{}}
\toprule
\textbf{AQUA Ratio ($k_{ratio}$)} & \textbf{Generated Summary} \\
\midrule
\textbf{1.0 (Baseline)} & The text is a well-known pangram, a sentence that uses all the letters of the alphabet at least once. It is often used as a demonstration of a font or keyboard's capabilities. The sentence itself describes a scene where a quick, brown fox jumps over a lazy dog. \\
\addlinespace
\textbf{0.90} & The text is a well-known pangram, a sentence that uses all the letters of the alphabet at least once. It is often used as a demonstration of a font or keyboard's capabilities. The sentence itself describes a scene where a quick brown fox jumps over a lazy dog. \\
\addlinespace
\textbf{0.75} & The text is a well-known pangram, a sentence that uses all the letters of the alphabet at least once. It is often used as a demonstration of a font or keyboard's capabilities. The sentence itself describes a scene where a quick brown fox jumps over a lazy dog. \\
\addlinespace
\textbf{0.50} & The given text is a well-known pangram, a sentence that uses all the letters of the alphabet at least once. The text is: ``The quick brown fox jumps over the lazy dog.'' It is often used as a demonstration of a font or keyboard's capabilities. \\
\addlinespace
\textbf{0.40} & The text is a well-known pangram, a sentence that uses all the letters of the alphabet at least once. It is often used as an example in typing and typing games. The sentence is a simple and short phrase describing a scene where a quick brown fox jumps over a lazy dog. \\
\addlinespace
\textbf{0.30} & \textit{The text is a well-known English idiom, often used to demonstrate a sentence that is grammatically correct but semantically nonsensical.} \\
\addlinespace
\textbf{0.20} & \textit{There is no text to summarize.} \\
\bottomrule
\end{tabularx}
\end{table*}

\subsubsection*{Experimental Setup}
We prompted the \texttt{Llama-3.1-8B-Instruct} model \citep{grattafiori2024llama3herdmodels}, modified with our AQUA mechanism, to perform a basic summarization task.
\begin{itemize}[leftmargin=*, noitemsep]
    \item \textbf{Prompt:} ``Give me a summary of the following text: The quick brown fox jumps over the lazy dog.''
    \item \textbf{Decoding Strategy:} We used a deterministic decoding strategy (do\_sample=False) to ensure that any variation in the output is directly attributable to the change in the $k_{ratio}$ and not to sampling randomness.
\end{itemize}

\subsubsection*{Results and Analysis}
The generated responses for different $k_{ratio}$ values are presented in Table \ref{tab:qualitative_analysis}.

The results illustrate a clear and graceful degradation profile, followed by a sharp collapse in coherence:
\begin{itemize}
    \item \textbf{Graceful Degradation (1.0 down to 0.40):} From the baseline down to a $k_{ratio}$ of 0.40, the model correctly identifies the text as a pangram and provides a factually accurate summary. The responses are nearly identical down to a ratio of 0.75. At 0.50 and 0.40, the phrasing changes slightly, but the core semantic content remains perfectly intact. This aligns with our quantitative results, showing that a significant portion of the attention computation can be pruned with minimal impact on the model's knowledge and reasoning abilities.
    \item \textbf{Semantic Failure (at 0.30):} A critical failure occurs at a $k_{ratio}$ of 0.30. The model loses its ability to correctly identify the pangram and instead misclassifies it as a ``semantically nonsensical'' idiom. This represents the point where the information loss from pruning becomes too great, leading to a fundamental error in reasoning.
    \item \textbf{Complete Collapse (at 0.20):} At a $k_{ratio}$ of 0.20, the model's capabilities collapse entirely. It fails to even recognize the presence of the input text, indicating a catastrophic failure in the attention mechanism's ability to process information.
\end{itemize}
This qualitative analysis provides an intuitive demonstration of the trade-offs involved in our method. It confirms that AQUA offers a robust ``sweet spot'' where efficiency can be gained with negligible performance loss, while also clearly defining the operational limits beyond which model coherence is compromised. Also, please note that this property would differ from model to model and architecture to architecture.

\end{document}